\newtheorem{theorem}{Theorem} 
\newtheorem{corollary}{Corollary}[theorem]
\newtheorem{lemma}[theorem]{Lemma}
\newtheorem{proposition}[theorem]{Proposition}
\theoremstyle{remark}
\newtheorem{remark}{Remark}[theorem]
\def\th@remark{
  \thm@headfont{\bfseries\itshape} 
  \thm@notefont{\normalfont\itshape}         
  \normalfont                      
}
\definecolor{lightgray}{rgb}{.9,.9,.9}
\definecolor{darkgray}{rgb}{.4,.4,.4}
\definecolor{purple}{rgb}{0.65, 0.12, 0.82}
\newcommand{\cmark}{\ding{51}}
\newcommand{\xmark}{\ding{55}}
\lstdefinelanguage{rasp}{
  keywords={def, return, if, else},
  keywordstyle=\color{Apricot}\bfseries,
  keywords=[2]{indices, tokens, full\_s, tokens\_int, tokens\_float},
  keywordstyle=[2]\color{Violet}\bfseries,
  keywords=[3]{length, full\_s, selector\_width},
  keywordstyle=[3]\color{Fuchsia}\bfseries,
  keywords=[4]{select, aggregate},
  keywordstyle=[4]\color{Green}\bfseries,
  keywords=[5]{set, examples, example, off},
  keywordstyle=[5]\color{Magenta}\bfseries,
  otherkeywords={==,<=,>=,<,>},
  morekeywords=[6]{==,<=,>=,<,>, and, or, not},
  keywordstyle=[6]\color{cyan}\bfseries,
  identifierstyle=\color{black},
  sensitive=false,
  comment=[l]{\#},
  morecomment=[s]{'''}{'''},
  commentstyle=\color{purple}\ttfamily,
  stringstyle=\color{red}\ttfamily,
  morestring=[b]',
  morestring=[b]"
}
\title{ On the Existence of Universal Simulators of Attention } 
\author{
 Debanjan Dutta \\
 Indian Statistical Institute \\
 West Bengal, India \\
\And
 Anish Chakrabarty \\
 Indian Statistical Institute \\
 West Bengal, India \\
\And
 Faizanuddin Ansari \\
 Indian Statistical Institute \\
 West Bengal, India \\
\And
 Swagatam Das \\
 Indian Statistical Institute \\
 West Bengal, India \\
}
\renewcommand{\vec}[1]{\mathrm{\mathbf{#1}}}
\renewcommand{\mod}{\operatorname{mod}}
\DeclareMathOperator{\Pool}{Pool}
\DeclareMathOperator{\f}{\circledcirc}
\DeclareMathOperator{\concat}{\odot}
\DeclareMathOperator{\prop}{prop}
\renewcommand{\vec}[1]{\mathrm{\mathbf{#1}}}
\newcommand{\N}{\mathbb{N}}
\newcommand{\R}{\mathbb{R}}
\newcommand{\softmax}{\mathrm{softmax}}
\begin{document}
\maketitle
\begin{abstract}
    Previous work on the learnability of transformers \textemdash\ focused primarily on examining their ability to approximate specific algorithmic patterns through training \textemdash\ has largely been data-driven, offering only probabilistic guarantees rather than deterministic solutions. Expressivity, on the contrary, has been devised to address the problems \emph{computable} by such architecture theoretically. These results proved the Turing-completeness of transformers, investigated bounds focused on circuit complexity, and formal logic. Being at the crossroad between learnability and expressivity, the question remains: \emph{can a transformer, as a computational model, simulate an arbitrary attention mechanism, or in particular, the underlying operations?} In this study, we investigate the transformer encoder's ability to simulate a vanilla attention mechanism. By constructing a universal simulator $\mathcal{U}$ composed of transformer encoders, we present algorithmic solutions to replicate attention outputs and the underlying elementary matrix and activation operations via RASP, a formal framework for transformer computation. We show the existence of an algorithmically achievable, data-agnostic solution, previously known to be approximated only by learning.
\end{abstract}

\section{Introduction}
\label{sec:1}
The widespread adoption of language models across diverse fields of study, whether in task-specific applications \citep{lin2022survey, haruna2025exploring, consens2025transformers} or theoretical verifications \citep{strobl2024formal}, underscores the remarkable success of attention-based transformers. These models have demonstrated the ability to \emph{learn} from tasks and function as \emph{simulators} of a broad range of computational architectures. While ongoing investigations seek to characterize the representational power of trained transformers from both statistical (in-context \citep{mroueh2023towards, kim2024transformers}) and computational perspective \citep{merrill-etal-2020-formal, liu2023transformers, merrill2024the}, a fundamental question remains unanswered: \textit{irrespective of the complexity class to which a transformer belongs, can a mechanism simulate attention itself using only interactions between vanilla transformers?} Specifically, we ask whether such a mechanism exists that emulates the functioning of a single-layer transformer encoder, given we have access to a system with transformers as the only computational model. Throughout our discussion, we focus on the self-attention mechanism of the transformer encoder. Note that these encoders can be solely characterized by their parameters.

To put the problem into perspective, we highlight that theoretical analyses of transformers often involve \emph{hard} (\emph{unique} or \emph{average}) attentions. The language \textsf{PARITY}=$\{w\in\{0,1\}^* \mid \#_1(w) \mod 2 = 0\}$ in particular, has been contextual in most of the investigations. \cite{hahn-2020-theoretical} pointed out the inability of transformers to recognize the language. Although the learnability of such transformers was not amenable \citep{bhattamishra2020ability}, \citeauthor{chiang2022overcoming}'s approach to overcoming this drawback entailed the explicit construction of a multi-layer multi-head $\softmax$ attention transformer (SMAT). The language, $k\text{-\textsf{PARITY}} = \{w\in\{0,1\}^n \mid S \subset \{0,1,\dots,n-1\} \text{ and } (\sum_{i_j \in S} w_{i_j}) \mod 2 = 0 \}$ where $|S|=k \ll n$, has been shown single-layer multi-head SMAT-learnable by \citeauthor{han2025attention}. To further detail the representational power of such SMATs, we mention the pair detection task $\text{Match}_2$, defined as $\text{Match}_2(S)_{i \in [|S|]} = \mathbbm{1}\{\exists j \mid (s_i+s_j) \mod p = 0\}$, where $S = (s_1,s_2,\dots,s_{|S|}) \in \{1,2,\ldots,p\}^{|S|}\}$ and $p$ is very large \citep{sanford2023representational}. While the same can be solved using a single-layer single-head SMAT, $\text{Match}_3$, an extension with three variables does not follow suit, even with the multi-layer multi-head extension. It may be worth mentioning here that even though a simple average hard attention transformer (AHAT) can be realized to compute the task $\text{Match}_2$ (see \autoref{lst:Match2} for a RASP implementation), the goal of this work is to go beyond prescribing task-specific AHATs. In specific terms, we aim to propose a universal simulator $\mathcal{U}$ composed of transformer encoders that can replicate any single-layer multi-head SMATs, including the one designed exclusively for $\text{Match}_2$. Our construction consolidates both notions of attention (hard and soft) into a unified computational model (namely $\mathcal{U}$) capable of performing any single-layer multi-head attention. As a stepping stone, our investigation simulates single-layer multi-head SMATs using (A)HATs, in line with \cite{yang2024simulating}. Our formulations are supported by Restricted Access Sequence Processing (RASP) \citep{weiss21a} implementations: a formal, human-readable framework that models transformer computation with parallel, attention-driven processing while enforcing constraints such as fixed computation depth, elementwise operations, and pairwise token dependencies. Specifically, given a single-layer transformer-attention $T$ and an input $X$, the $\mathcal{U}$ we construct is shown to realize the output (say, $T(X)$), on receiving the pair $\langle T, X\rangle$ (see \autoref{fig:U}). 

\begin{wrapfigure}[22]{r}{0.46\textwidth}
    \centering
    \vspace{-0.5\intextsep}
    \tikzset{
    rect/.style={draw=none, rounded corners=10pt, fill=orange!12, inner sep=8pt},
    smallbox/.style={draw=none, rounded corners=8pt, fill=green!10, inner sep=0pt},
    circle_node/.style={circle, draw=none, fill=blue!10, minimum size=18pt, inner sep=1pt},
    arrow/.style={-{Latex}, thin, draw=black!70},
    input_label/.style={font=\itshape}
}
\resizebox{0.45\textwidth}{!}{
\begin{tikzpicture}[
    node distance=1.5cm and 1.5cm,
    every node/.style={font=\sffamily},
]

    \node[input_label] (X) at (0,3.5) {$X$};
    \node[input_label] (A) at (0,2.5) {$A$};
    \node[input_label] (T) at (-0.65,1.75) {$T$};
    \node[input_label] (V) at (0,1) {$V$};

    \node[input_label] (padding) at (1.2,0) {};

    \node[circle_node, right=1.8cm of X] (m1) {$\otimes$};
    \node[circle_node, below=.75cm of m1] (top) {$\top$};
    \node[circle_node, below=.35cm of top] (m2) {$\otimes$};

    \path ($(m1)!0.5!(top) + (1.4cm,0)$) node[circle_node] (m3) {$\otimes$};

    \node[circle_node, right=0.85cm of m3] (sigma) {$\sigma$};
    \node[circle_node, right=0.85cm of sigma] (m4) {$\otimes$};

    \begin{pgfonlayer}{background}
        \node[rect, fit=(m1)(top)(m2)(m3)(sigma)(m4)(padding), label=above:{\Large$\mathcal{U}$}] {};
        \node[smallbox, fit=(A)(T)(V)] {};
    \end{pgfonlayer}

    \node[circle_node, right=.2cm of T, yshift=-2.6cm, label=right:{\textnormal{: Lemma \ref{lem:transposition}}}] (legtop) {$\top$};
    \node[circle_node, right=1.9cm of legtop, label=right:{\textnormal{: Lemma \ref{lem:softmax}}}] (legsigmoid) {$\sigma$};
    \node[circle_node, right=1.9cm of legsigmoid, label=right:{\textnormal{: Lemma \ref{lem:matmul}}}] (legmatmul) {$\otimes$};

    \draw[arrow] (X) -- (m1);
    \draw[arrow] (A) -| (m1);
    \draw[arrow] (V) -- ++(1.2,0) |- ++(0,-.8) -- ++(1.2,0) -- (m2);
    \draw[arrow] (X) -- ++(1.2,0) |- (top);
    \draw[arrow] (X) -- ++(1.2,0) |- (m2);

    \draw[arrow] (m1) -- ++(.65,0) |- ++(0,-0.7) -- (m3);
    \draw[arrow] (top) -| (m3);
    \draw[arrow] (m3) -- (sigma);
    \draw[arrow] (sigma) -- (m4);
    \draw[arrow] (m2) -| (m4);
    \draw[arrow] (m4) -- ++(1,0);

\end{tikzpicture}
}
    \caption{Simulation of attention $T$ characterized by matrices $A$ and $V$ on input $X$ using the proposed transformer network $\mathcal{U}$ such that $\mathcal{U}\left(\langle T, X\rangle\right) = T(X)$. Operations $\top,  \boxtimes \text{ and } \sigma$ represent matrix \textit{transposition}, \textit{multiplication} and activation $\softmax$ implemented using transformer as presented in \autoref{lem:transposition}, \autoref{lem:matmul} and \autoref{lem:softmax} respectively.}
    \label{fig:U}
\end{wrapfigure}

Our approach in this study analogizes the rationale à la Universal Turing Machine (UTM). Observe that a UTM $U$ accepts the encoded pair $\langle \hat{T}, w\rangle$ if and only if the Turing machine $\hat{T}$ accepts the word $w$. Inspired by the same, \cite{kudlekexistence} explored the (non)-existence of such universal automata for some weaker classes, such as finite and pushdown automata. This, in turn, motivates the inquiry into the simulation of other computational models. Analogously, our constructed transformer network, $\mathcal{U}$, when deemed a language recognizer, can either accept or reject depending upon whether the original transformer encoder $T$ accepts or rejects $X$. When viewed as a \emph{transducer}, it can produce the same output as the original transformer encoder, $T$, on input $X$. It is rather natural to explore the idea of such self-simulation for architectures coupled with decoder attention, given the Turing-completeness \citep{perez2021attention}. Ours, in contrast, involves encoder-only architectures, which have further limited computational capabilities \citep{strobl2024formal}. As defined in \cite{hao2022formal}, the self-attention mechanism introduced by \citeauthor{vaswani2017attention} belongs to the category of \emph{restricted} transformers. Our simulations will be confined to this class of transformers due to their ubiquitous influence.

Our approach also bridges a crucial gap between expressivity and learnability of transformer models. The problem $k$-\textsf{PARITY}, for example, achieves learnability through transformers \citep{han2025attention}. On the other hand, our construction provides not only a definitive method to solve the same, but its applicability can also be generalized for related problems, e.g., $\text{Match}_2$. To contextualize, we point to the long line of works that explore the expressiveness of transformers in simulating important models of computation \citep{perez2019turing, perez2021attention, hao2022formal, barcelo2024logical}, without determining the exact computational classes that include and are included by a transformer's recognition capacity. On the other hand, guarantees regarding the learning capacity of theoretically constructed transformers and their verification toward generalization onto the learned computation procedure (e.g., gradient descent in function space \citep{cheng2024transformers}; Newton's method updates in logistic regression \citep{giannou2024transformersimulate}) inherently become probabilistic and data-dependent. Such results lose justification in scenarios where approximation errors are unacceptable (e.g., formal verification). In contrast, our proofs provide a solution that \textit{algorithmically enforces} correct attention behavior, ensuring reliability beyond data-driven approximations. From a probabilistic viewpoint, this can be regarded as an approximation guarantee with \textit{certainty}, i.e., with $\mathbb{P}$-measure $1$, given $X$ follows the law $\mathbb{P}$.

\noindent\textbf{Contributions.} The key takeaways of our study are as follows. 
\begin{itemize}[leftmargin=*, itemsep=1pt]
    \item We introduce a novel construction framework of amenable matrix operations underlying attention, such as transposition (\autoref{lem:transposition}), multiplication (\autoref{lem:matmul}), determinant calculation, and inversion (\autoref{inversion}) using a transformer itself. We also show that algorithmic constructions exist that exactly represent activation outputs (e.g., $\operatorname{softmax}$ (\autoref{lem:softmax}), $\operatorname{MaxMin}$ (\autoref{lem:MaxMin})). The results combined present a new approach to proving a transformer encoder's expressivity towards a Lipschitz continuous function. All of our constructions come with corresponding RASP implementations and are available in the following repository {\small\url{https://anonymous.4open.science/r/TMA}}.
    \item Our proposed simulator network $\mathcal{U}$ maintains parity with the architectures under simulation in terms of the following fundamental architectural resemblance. Due to its sole reliance on the number of input symbols, $\mathcal{U}$ possesses an inherent hierarchy while expressing attentions of increasing order, leading to a universal simulator (\autoref{Th:1}, \ref{thm:U_multihead}, \autoref{col:expressive_hierarchy}).
    \item Given architectural specifications, our construction, for the first time, ensures the feasibility of simulating soft-attention (restricted transformer) using average-hard attention (RASP) (\autoref{rem:ahattosmat}, \ref{rem:multihead}). The result extends to models involving multiple heads as long as their aggregation mechanism satisfies AHAT-interpretability (\autoref{rem:linformer}).
\end{itemize}

\section{Related Works}
\label{sec:relworks}

It is crucial to contextualize our use of the term \emph{simulate}. Every representation or simulation achieved within classical computational frameworks (namely, the von Neumann architectures that fundamentally drive modern computation) inherently relies on finite-precision approximations when computing over the reals. Nevertheless, such approximations do not compromise the validity of the underlying operations. We adopt a functionally equivalent paradigm in our approach. Specifically, if any numerical operation is represented by the FFNs of the simulating transformer (which is also a supported property in RASP), we invoke the universal approximation theorem \cite{hornik1989multilayer} to assume it can be approximated to any arbitrary precision $\epsilon > 0$. This ensures that the simulated representation remains as robust and functionally indistinguishable as its original output. By explicitly formulating the transformations required to simulate matrix operations, including transposition, multiplication, and inversion within the constraints of a transformer, our work provides a novel foundation for understanding the representational capacity of self-attention. This marks an improvement over \citet{giannou2023looped}'s construction, which relies on a computational framework that is not entirely transformer-based and amplifies input size, e.g., transposing a $d \times d$ matrix requires an $O(4d) \times d^2$ input. We provide a detailed walkthrough of the same in   \autoref{tab:comparison}.

\textbf{Simulation of computational models via transformers.}
Existing work in this line lacks uniformity in the transformer architectures they consider. The distinction primarily stems from the presence (or rather the lack) of decoders and the specific implementation of positional encoding. Introduced by \citet{perez2019turing} and later \cite{hahn-2020-theoretical}, much attention has been given to studying the theoretical capabilities of transformers, characterizing their expressivity in terms of diverse circuit families \citep{hao2022formal, merrill2022saturated, chiang:2025}. 
Along this line, the development of domain-specific languages (DSLs), such as RASP \citep{weiss21a}, enabling the expression of self-attention and transformer operations in a human-interpretable way, inspired several investigations \citep{strobl2024transformers} {--} we review B-RASP \citep{yang2024masked} and C-RASP \citep{yang2024counting}, variants those explore the effect of masking on transformer encoders while the former is restricted to boolean values; RASP-L \citep{zhou2024what}, a learnable version considering tasks of generalized length on a decoder-only autoregressive version of transformers. These DSLs lead to different expressiveness, based on both the augmentation and constraining of the features; however, could hardly challenge the original framework proposed in RASP as a true computational model of transformer encoders.
It is aptly contextual in our discussion, given our goal of prescribing realizable pathways to simulate operations underlying attention. 
On the other hand, \citet{yang2024simulating} demonstrated the realization of hard attention through soft attention. The proof involves simulating a logical language family, implementable by both mechanisms. Studies using a transformer as a language recognizer have also been pursued. Backed by the empirical studies \citet{dehghani2019universal, Shi_Gao_Tian_Chen_Zhao_2022, deletang2023neural}, the expressivity of transformers has been investigated by measuring their equivalence with Turing machines \citep{bhat2020computational, perez2021attention}. Remarkably, \citeauthor{bhat2020computational} simulated a recurrent neural network (RNN), which is a Turing-complete architecture \citep{Hava1992}, via transformers. Later, \citeauthor{perez2021attention} used transformers to carry out a direct simulation of a Turing machine. More recently, \citet{merrill2023logic, barcelo2024logical} drew equivalence with the logical expressions accepted by transformers. In the broader landscape of such remarkable findings, our question remains ever so practical and fundamental: can a transformer, when viewed as a computational model, simulate the basic constituent operations and the attention mechanism? 

\begin{table}[t]
\centering
\caption{Comparative analysis between the softmax-based transformer architecture proposed by \cite{giannou2023looped} (\textit{first blocked row}) and our proposed AHAT-based architecture (\textit{second blocked row}). We report number of layers ($L$), head count ($H$) (\textit{first row}), and vectorized input complexity (\textit{second row}). \textdagger\ indicates the maximum of input complexity required for multiplication, subtraction, and transposition. `Dep' indicates if weights depend on input values (\textit{third row}). For multiplication, on matrices $A \in \R^{r\times k}$ (vs. $A \in \R^{k\times r}$) and $B\in \R^{k\times c}$, we compute $AB$ (vs. $A^\top B$ where $d > \max(k,r,c)$). Also, in case of inversion, we perform exact inversion of $3 \times 3$ matrices in contrast to \citet{giannou2023looped} implementing Newton's method of inversion, which fundamentally differs from the transformer architecture by considering loop as an additional property.}
\begin{tabular}{p{2.3cm}ccc}
\toprule
 & Transposition & Multiplication & Inversion \\
\midrule
\cite{giannou2023looped} & \makecell[t]{$4L, 1H$ \\ $O(4r^3)$ \\ Dep: \cmark \\ \scriptsize{(Square only)}} 
                                  & \makecell[t]{$2L, 1H$ \\ $O(54d^2)$ \\ Dep: \cmark \\ \scriptsize{(Computes $A^\top B$)}} 
                                  & \makecell[t]{$13L, 1H$ \\ \textdagger \\ Dep: \cmark \\ \scriptsize{(\textbf{Any})}} \\
\midrule
Ours                     & \makecell[t]{$2L, 1H$ \\ $O(r\cdot)$ \\ Dep: \cmark \\ \scriptsize{(\textbf{Any})}} 
                                  & \makecell[t]{$3L, O(rc)H$ \\ $O(k(r+c))$ \\ Dep: \cmark \\ \scriptsize{(\textbf{Computes $AB$})}} 
                                  & \makecell[t]{$5L, 4H$ \\ $O(r^2)$ \\ Dep: \cmark \\ \scriptsize{($3 \times 3$ matrix)}} \\
\bottomrule
\end{tabular} \vspace*{-0.68em}
\label{tab:comparison}
\end{table}

\textbf{Approximation and learnability.} It is worth mentioning that most proofs regarding the simulation of any system (above discussion) do not provide a realizable pathway for adoption in a learning setup. Our proofs, in contrast, are designed with the very goal of realizability in mind. Building towards it, let us walk through the accrued wisdom in this context. While vanilla transformer encoders are, in general, universal approximators of continuous sequence-to-sequence (permutation equivariant) maps supported on a compact domain \citep{Yun2020Are}, they require careful construction to extend the property to models with nonlinear attention mechanisms \citep{alberti2023sumformer} and non-trivial positional encoding \citep{luo2022your}. However, it remains unclear whether the approximation capability holds while learning, given the unidentifiability of additional optimization errors due to data-driven training. In this context, we also mention that transformers can learn sparse Boolean functions of input samples having small bounded weight norms \citep{edelman2022inductive}. Along the line, \citet{yau2024learning} ensures that multi-head linear encoders can be learned in polynomial time under $L^2$ loss. Corroborating \citet{perez2021attention}'s finding in a learning setup, \citet{wei2021advances} also show that output classes of functions from TMs can be efficiently approximated using transformers (encoder-decoder). In contrast, the domain that has received the most attention is transformers' capacity to learn tasks in-context (IC) \citep{mroueh2023towards}. Under varying assumptions on the architecture and data, transformers provably tend to emulate gradient updation \citep{ahn2023transformers, cheng2024transformers}, Newton's iterations \citep{giannou2024transformersimulate}, and perform linear or functional regression \citep{fu2024transformers,pathak2024transformers,zhang2024trained}. 

Perhaps the work that is most aligned to our line of inquiry is \citet{giannou2023looped}. However, we emphasize that their \texttt{SUBLEQ} or the generalized \texttt{FLEQ} designs augment inputs with scratchpad, memory and instruction, and the output often includes non-essential residuals (e.g., duplicate results of a matrix transposition, Lemma 12) unless post-processed. Even though their overall layer-count and head-count are constant, several hyperparameters lie intrinsically dependent on the number of tokens ($n$), e.g., the approximation bound is valid when temperature $\lambda \ge \log \frac{n^3}{\epsilon}$ (Lemma 2). Similarly, the assignment of the non-trivial parameters $V$ depends on $n$ (Lemma 13). Above all, its underlying computational framework fundamentally increases the depth (i.e., the layer count) by allowing loops. In \autoref{tab:comparison}, we outline the key differences that distinguish our work from \cite{giannou2023looped}. Therefore, it is justified to infer that the ensuing computational power of the model becomes stronger compared to self-attention-based transformers \citep{hahn-2020-theoretical, feng2023towards, qiu2025ask}. From a learning point of view, our realizable algorithmic pathways can be deemed a solution that provides certainty for simulating fundamental operations underlying the attention mechanism.

\section{Preliminaries}
\label{sec:prelim}
An element in $n$-dimensional array $M$ has been represented using $M[i_0,i_1,\ldots,i_{n-1}]$. To reduce notational overhead, we denote by $M[i_0]$ the induced $(n-1)$-dimensional array associated with the $i_0$\textsuperscript{th} index of the introductory dimension in $M$. Note that a matrix is a two-dimensional array. We also highlight the difference between the usage of $\concat\limits_{i=1}^{m}\left(a_i\right)$ and $\f\limits_{i=1}^{n}\left(a_i\right)$. While the former, the concatenation operation, denotes the expression $a_1\concat a_2 \concat \ldots \concat a_m$ as usual, the latter, an $n$-ary operation, is used to denote $\f\left(a_1, a_2, \ldots, a_n\right)$. Given the sets of sequences $\mathcal{S}$ and $\mathcal{S}^\prime$, we define a mapping $f: \mathcal{S} \to \mathcal{S}^\prime$ as \emph{length-preserving} if for any $S\in \mathcal{S}$, $|S| = |f(S)|$, where $|\cdot|$ implies the number of elements in the sequence. Given sequences $S_1, S_2 \in \mathbb{R}^n$, the operations $/$ and $\otimes$ respectively induce elementwise division and multiplication, i.e., 
$(S_1 / S_2)[i] \coloneqq \frac{{S_1}[i]}{{S_2}_[i]}$ and $(S_1 \otimes S_2)[i] \coloneqq {S_1}[i] \cdot {S_2}[i]$. Additionally, the operation $\circ$ on a matrix $R \in \R^{r\times c}$ and a sequence $S \in \R^{r}$ produces another sequence $S^\prime \in \R^{c}$ such that $S^\prime[i] = \sum_{j\in [c-1]_{\cup\{0\}}}R[i, j]\cdot S[j]$. Note that taking $r = c$ produces a length-preserving sequence. Keeping to convention, $\mathbf{1}^{r\times c}$ and $\mathbf{0}^{r\times c}$ denote matrices with all entries $1$ and $0$ respectively. And $[r]_{\cup\{0\}}$ denotes the set $\{0,1, \ldots, r\}$. $\Pi([r])$ denotes the set of all permutations of $[r]$.

\subsection{Transformer Encoder}
\label{ssec:transenc}
A transformer encoder is a layered neural network that maps strings to strings. The input layer maps the string to a sequence of vectors. The subsequent layers apply the attention mechanism, which is composed of the sublayers' self-attention and feed-forward components. For ease of representation, we avoid the layer normalization mechanism. The final output layer maps the sequence of vectors back to a string. The following discussion formalizes the same.\\[1.5pt]
\textbf{Input.} Let $\vec{w} = w_0w_1\dots w_{n-1}$ be a string, where each character $w_i$ belongs to the input alphabet $\Sigma$. We assume the input layer of any transformer to be composed of the word embedding $\textrm{WE}:\Sigma\to \R^d$ and positional embedding $\textrm{PE}:(\N\times \N)\to \R^d$ in an additive form, so that the produced input vector becomes $X = (\vec{x}_0,\vec{x}_1,\ldots,\vec{x}_{n-1}) \in \R^{n\times d}$ such that $\vec{x}_i = \textrm{WE}(w_i) + \textrm{PE}(i, |w|)$.\\[1.5pt]
\textbf{Encoder attention.} The first component of an encoder layer $\ell$ is self-attention. Assuming $X^{(0)} = X$, on an input $X^{(\ell-1)}, \ell \in \{1,2,\ldots, L\}$ a self-attention mechanism produces
\begin{align}
\label{eq:softmaxatt}
    \sigma\Big(X{^{(\ell-1)}}W_Q{^{(\ell)}}{W_K^{(\ell)}}^{\top}{X^{(\ell-1)}}^{\top}\Big)X^{(\ell-1)}W_V^{(\ell)},
\end{align}
where, $\sigma$ is a $\softmax$ activation, computing the attention scores from the query $X^{(\ell-1)}W_Q^{(\ell)}$ and key $X^{(\ell-1)}W_K^{(\ell)}$ to draw the influential value vectors $X^{(\ell-1)}W_V^{(\ell)}$ in a composite form, where the weight matrices $W_Q^{(\ell)}, W_K^{(\ell)} \in \mathbb{R}^{d\times d^\prime}, W_V^{(\ell)}\in \mathbb{R}^{d\times d_{v}} \text{ and the input } X \in \mathbb{R}^{n\times d}$. Following a residual connection, a subsequent feed-forward network (FFN), consisting of two linear transformations with a $\operatorname{ReLU}$ activation in between, is applied to this result. Note that in the above expression the projections $W_Q^{(\ell)}$ and $W_K^{(\ell)}$ can be combined to result $A^{(\ell)} \eqqcolon W_Q{^{(\ell)}}{W_K^{(\ell)}}^{\top} \in \R^{d\times d}$, and to simplify notations, we rename $W_V$ to $V$. A self-attention at a layer $\ell$ can thus be uniquely characterized by the three parameters $A^{(\ell)}, V^{(\ell)}$ and any normalizing activation function, here taken as $\softmax$. We will drop the notation $\ell$ wherever the context is self-explanatory.
We call a transformer attention $T$ applied on input $X\in\R^{n\times d}$ of order $(n, d, d_v)$ if its characterizing matrices $A\in\R^{d\times d}$ and $V\in \R^{d\times d_v}$. Similarly, when a single-layer transformer $T$ with parameters $W_1\in\R^{d_{v}\times d_{1}}$ and $W_2\in \R^{d_{1}\times d_{2}}$ in feed-forward sublayer is applied on input $X\in\R^{n\times d}$, we call it of order $(n, d, d_v, d_1, d_2)$.

\subsection{GAHAT}
\label{ssec:gahat_rasp_overview}
A generalized attention, as proposed by \citet{hao2022formal}, takes the query and key as input and does not restrict them to be combined using the dot-product operation only. Instead, any computable association can be employed to calculate attention scores. Finding the dominant value vectors has also been kept flexible using a function $\Pool$ that takes the value vectors and the attention scores. When this function is particularly \emph{unique} (or, \emph{average}) hard, such transformers are regarded as generalized unique (or, average) hard attention transformers (GUHAT or GAHAT). As such, given value vectors $XV = (\vec{y}_0,\vec{y}_1,\ldots,\vec{y}_{n-1})$ and attention scores $(a_0, a_1, \ldots, a_{n-1})$, let $j_0, j_1,\ldots,j_{m-1} \in [n-1]_{\cup \{0\}}$ are the indices in ascending order such that they maximize $a_j$s. Then, unique hard attention pools $\vec{y}_{j_0}$ while average hard attention pools $\frac{1}{m}\sum_{i=0}^{m-1}\vec{y}_{j_i}$ (see also \citet{merrill2022saturated}).

The computational model underlying the Restricted Access Sequence Processing Language (RASP), introduced by \citet{weiss21a}, resembles that of GAHAT, based on overlapping sufficiency characterizations (Section 3.1 in \citet{weiss21a} and Section 4.3 in \citet{hao2022formal}). Also, in the same light, the expression above (\ref{eq:softmaxatt}) falls under the category of \textit{restricted} transformers. RASP is a human-interpretable, sequence-processing DSL for designing transformer encoders. It operates on a sequence of tokens (e.g., characters, numbers, Booleans) to produce a length-preserved output sequence. Its core syntax includes elementwise operations and two non-elementwise operations: \lstinline{select} and \lstinline{aggregate}, which together correspond to a single self-attention layer. Token values and positions are accessed via \lstinline{tokens} and \lstinline{indices}. Lacking loops, RASP execution is inherently parallelizable operations, mirroring self-attention (via \lstinline{select} and \lstinline{aggregate} pair that resembles the $QKV$ operation), with elementwise operations reflecting terminal feed-forward layers. This absence of iterative constructs limits its applicability to inherently sequential computations, a direct consequence of the transformer's constant-depth nature that prevents arbitrary iteration simulation in one pass. Note that the \lstinline{aggregate} operation is crucial for derived operations like \lstinline{length}, which returns a sequence of the scalar repeated to maintain length.

Given the definition of Average Hard Attention (AHA) by \citet{hao2022formal} (Def. 9), and the fact that \lstinline{aggregate} performs an \emph{average} over value vectors from the Boolean attention matrix generated by \lstinline{select}, it is evident that the attention module in RASP is AHA. The \lstinline{select} operation uses a Boolean predicate to associate keys and queries, placing it under the category of Generalized Average Hard Attention (GAHA). While GAHAT allows any terminating aggregator function\footnote{The choice of the codomain of $g$ as $\{0, 1\}$ is purely based on the objective of language recognition.} (which is a ReLU-activated FFN for restricted transformers), RASP permits any FFN for the same. The only sufficient condition for an activation to be compliant with RASP is universal (also uniform) approximation with arbitrary accuracy of regular maps, e.g., continuous Borel-measurable functions, Besov functions, etc.

In the scope of restricted transformers, various attention mechanisms have been employed to achieve faster computation, differing mainly in their choice of characterizing matrices and/or the $\Pool$ function. For instance, Linformer \citep{wang2020linformer} is one that introduces new characterizing matrices $E, F \in \mathbb{R}^{k\times n}$ for some $k < n$ such that the attention becomes
\begin{align}\label{eq:linformer}\sigma\left((XW_Q)\left(EXW_K\right)^{\top}\right)FXV.\end{align}
A linear attention \citep{LinearTransformer}, on the other hand, assumes no $\Pool$, resulting in:
\begin{align}\label{eq:linatt}(XW_Q)\left(XW_K\right)^{\top}XV.\end{align}

\section{On Simulating Attention}
\label{sec:simulation}
\vspace{-2pt}
In this section, we will provide all necessary propositions and lemmas towards constructing the transformer network $\mathcal{U}$ that provably simulates arbitrary transformer attention $T$ of order $(n, d, d_v)$ (\autoref{Th:1}). 
\autoref{lem:transposition}-\ref{lem:matmul} use the representation given in \autoref{1D} to perform some basic operations such as matrix transposition, applying $\softmax$ activation, and matrix multiplication.
\autoref{algo:trans}-\ref{algo:matmul} serve the constructive proofs of the respective lemmas via GAHAT. \autoref{lst:transpose}-\ref{lst:matmul} provide the corresponding RASP codes. Notice that, while these algorithms involve notation $r$ denoting the matrix order (i.e. the number of rows), we have considered $r=3$ in the RASP codes as presented in Appendix \ref{ssec:RASPCodes}.

Before proceeding with the main construction, let us establish some foundational results.



\begin{proposition}
\label{prop:AHAT}
Let $T$ denote a single-layer GAHAT equipped with an FFN module that parameterizes the function $f$. For any arbitrary scalar $v \in \mathbb{R}$, there exists a corresponding single-layer GAHAT $T^\prime$ such that its constituent FFN module encodes the function $f^\prime(\cdot) = f(v \cdot)$.
\end{proposition}
\begin{proof}
    Let the attention mechanism of $T^\prime$ be parametrically identical to that of $T$. It remains to define the FFN component of $T^\prime$ such that it encodes the composed mapping $f^\prime(x) = f(vx)$. This requires the target FFN to perform a scalar multiplication by $v$ before the evaluation of the original function $f$. Followed by \cite{hornik1989multilayer}, a single FFN architecture is capable of representing this continuous compositional mapping to an arbitrary degree of precision, where $f$ is already implemented in the FFN of $T$. The existence of such an FFN configuration thereby completes the construction of $T^\prime$.
\end{proof}
Implication of the result (w.r.t RASP): The RASP primitive \lstinline{aggregate} inherently computes a mean over the attended value sequence. Consider the case where each row of the attention score matrix contains exactly $v$ dominant entries. Under this condition, the aggregation yields the sum scaled by a factor of $1/v$. By invoking the preceding proposition, the subsequent FFN component can be parameterized to apply a scalar multiplication of $v$. This effectively neutralizes the averaging mechanism to recover the summation of values.
\begin{proposition}
    \label{1D}
    Let $\mathcal{M}$ and $\mathcal{M}^\prime$ denote the spaces of $n$-dimensional and $1$-dimensional arrays, respectively. For any function $g : \mathcal{M} \to \mathcal{M}$, there exists a corresponding function $g^\prime : \mathcal{M}^\prime \to \mathcal{M}^\prime$ such that $g(A) = \prop^{-1}(g^\prime(\prop(A)))$, where $\prop : \mathcal{M} \to \mathcal{M}^\prime$ is a bijection.
\end{proposition}
\begin{proof}
    Let the construction $g^\prime$ be amenable to basic arithmetic operations. Note that to obtain $g$ using $g^\prime$ it is enough to show that $\prop$ is implementable using basic arithmetic operations. We would apply induction on $n$ to prove this and hence to show that all elements can be accessed uniquely. Avoiding the trivial case when $n=1$, let us consider a two-dimensional array $A$ having $m_0$ rows and $m_1$ columns. Note that any element $A[i_0,i_1]; i_l \in [m_l-1]_{\cup \{0\}}, l\in\{0,1\}$ can be accessed from the one-dimensional array $A^\prime$ of size $m_0m_1$ using the elementary index calculation $m_0i_0+i_1$. Let there exist a one-dimensional representation $A^\prime$ for the $n$-dimensional array $A$. To construct an one-dimensional representation $A^{\prime\prime}$ for an $n+1$-dimensional array $A$, let $A^{\prime}_{j}$ denotes the one-dimensional representation of the $n$-dimensional array $A[j], j \in[m_0-1]_{\cup \{0\}}$. Now, concatenation of these $A^{\prime}_{j}$, say $A^{\prime\prime}$ is the one-dimensional representation of $A$ where an element $A[i_0, i_1, \ldots, i_n]$ can be accessed in $A^{\prime\prime}$ using index $\prod_{l=1}^{n}m_li_0+k$, where $k$ is the index of the element in $A^{\prime}_{i_0}$.
\end{proof}

Since transformers require sequential inputs, the above bijection between matrices and sequences is crucial. Building upon this, the following lemmas demonstrate the existence of transformers that execute the target matrix operations. Note that here $\prop$ has been implemented using the widely known row-major format; a column-major format may also be preferred.
In combination with \autoref{prop:AHAT}, proving these existences, and thereby the constructions involve finding suitable attention matrices over a sequence of steps that, upon applying a $\circ$ to the (intermediate) input results in the target matrix operation.


\begin{lemma}
\label{lem:transposition}
    Given any matrix $A$ of order $r$, there exists a transformer $T_{\top}^{(r)}$ such that, $T_{\top}^{(r)}(\prop(A)) = \prop(A^\top)$.
\end{lemma} 
\begin{proof}
    For $A \in \R^{r\times c}$, where $c \in \mathbb{N}$; a transformer map that results in $\prop(A^\top)$, essentially carries out a permutation $\rho$ (say) on the elements of $\prop(A)$. As such, proving the existence of a transformer boils down to constructing an AHA-realizable matrix $R$ of attention scores that rearranges $\prop(A)$ under the operation $\circ$ according to permutation $\rho$. In \autoref{algo:trans} (Line 2), we present the exact construction of $R$ as a function of $\rho \in \Pi([rc-1]_{\cup \{0\}})$, such that $T_{\top}^{(r)}(\prop(A)) \coloneqq R \circ \prop(A)$. Note that the resultant $R$ turns out to be a stochastic matrix, and therefore, immediately AHA-realizable (by taking $v=1$ in \autoref{prop:AHAT}). 
\end{proof}

\noindent Note that for square matrices, the \autoref{algo:trans} does not require the order $r$ explicitly. Since implementations using RASP allow any arithmetic computation, $r$ can be determined from the expression $r^2 = \text{\lstinline{length}}$.  

\begin{lemma}
\label{lem:softmax}
    There exists a transformer $T_{\sigma}^{(r)}$ implementing the operation $\softmax$ on matrix $A$ of order $r$, i.e., $T_{\sigma}^{(r)}(\prop(A)) = \prop(\sigma(A))$.
\end{lemma}
\begin{proof}
    Similar to transposition (\autoref{lem:transposition}), the operation in hand is length-preserving on a matrix $A$. The proof also follows a similar pathway, given the sufficiency due to the existence of AHA-realizable attention matrices. However, the challenge lies in appropriately scaling for each nested array in $\prop(A)$ (rows of $A$) along the order $r$. Thus, we construct attention matrices corresponding to each index $l \in [r]$ (\autoref{algo:softmax}, Line 2), that conform to the operation $\circ$, and eventually, due to the AHA-realizability of the scaling using $/$ gives a feasible $T_{\sigma}^{(r)}$.
\end{proof}

\noindent Note that any RASP implementation of \autoref{algo:softmax} undergoes the obvious approximation of the expansion $\exp(\cdot)$ (Line 1). While the resultant transformer, due to RASP, can thus be said to simulate the activation approximately, the existence (\autoref{lem:softmax}) holds nonetheless.   


\begin{algorithm}
    \SetAlgoLined\SetAlgoNoLine\SetNlSty{}{}{}\LinesNumbered\RestyleAlgo{ruled}\DontPrintSemicolon\SetAlgoSkip{}\SetKwComment{tcp}{\textcolor{blue}{$\vartriangleright$ }}{}\SetCommentSty{texttt}
    \KwIn{$\prop(A)$, where $A \in \R^{r\times c}$.}
    Let $\rho \in \Pi([rc-1]_{\cup \{0\}})$ such that $\rho([rc-1]_{\cup \{0\}}) = \{rj+i\}_{j \in [c-1]_{\cup \{0\}} \text{ for each } i \in [r-1]_{\cup \{0\}}}$.\; \tcp{\small Choosing a permutation $\rho$ of indices of $A$ that maps element $A[i,j]$ to position $(j,i)$.}
    Assign $R[\rho([rc-1]_{\cup \{0\}})[ci+j], ci+j] = 1$ to $R \in \mathbf{0}^{rc \times rc}$, for $i$ and $j$ as in line 1.\; \tcp{\small Creates an attention $R$ that maps the indices of $A$ to the reflected indices from $\rho$.}
    \Return $R \circ \prop(A)$.\;  
    \caption{ Transposing a matrix $A$ of order $r$. \hspace{\fill} [see \autoref{lst:transpose}]}
    \label{algo:trans}
\end{algorithm}
\begin{algorithm}
    \SetAlgoLined\SetAlgoNoLine\SetNlSty{}{}{}\LinesNumbered\RestyleAlgo{ruled}\DontPrintSemicolon\SetAlgoSkip{}\SetKwComment{tcp}{\textcolor{blue}{$\vartriangleright$ }}{}\SetCommentSty{texttt}
    \KwIn{$\prop(A)$, where $A = [a_{ij}] \in \R^{r\times c}$.}
    $A^\prime = [a_{ij}^\prime]$ such that $a_{ij}^\prime = \exp(a_{ij}).$\;
    Let $R_l = \begin{pmatrix}
        & \mathbf{0}^{r(l-1)\times rc} & \\
        \mathbf{0}^{r\times (l-1)c} & \mathbf{1}^{r\times c} & \mathbf{0}^{r\times (r-l)c}\\
        & \mathbf{0}^{r(c-l)\times rc} & \\
    \end{pmatrix}$, where $l \in [r]$.\; \tcp{\small Creates $r$ attention matrices each drawing the length-preserved sequence $A^{\prime}[i]$ padded with $0$, where $i \in [r-1]_{\cup\{0\}}.$}
    Let $\operatorname{sum} \in \R^{rc}$ such that $\operatorname{sum} = \sum_l \left(R_l \circ \prop(A^\prime)\right)$.\; \tcp{\small Creates an array $\operatorname{sum}$ such that  $\operatorname{sum}[k]=\sum_{j}A^{\prime}[i,j]$ for all $ci\le k < c(i+1)$.}
    \Return $\prop(A^\prime) / \operatorname{sum}$.
    \caption{\small Applying $\softmax$ on a matrix $A$ of order $r$. \hspace{\fill} [see \autoref{lst:softmax}]}
    \label{algo:softmax}
\end{algorithm}
\begin{algorithm}
    \SetAlgoLined\SetAlgoNoLine\SetNlSty{}{}{}\LinesNumbered\RestyleAlgo{ruled}\DontPrintSemicolon\SetAlgoSkip{}\SetKwComment{tcp}{\textcolor{blue}{$\vartriangleright$ }}{}\SetCommentSty{texttt}
    \KwIn{$\prop(A)\concat\prop(B)$, where $A \in \R^{r\times k}$ and $B \in \R^{k \times c}$.} \tcp{\small Let $r \:(\text{and\:} c) \mapsfrom $ order of $A$ (and $B^\top$).}
    Assign $(R_l)[i,j] = 1$ for $i \in [k(r+c)-1]_{\cup \{0\}}$ and $j=k(l-1) + (i \mod k)$ to $R_l \in \mathbf{0}^{(k(r+c)) \times (k(r+c))}$, where $l \in [r]$.\; \tcp{\small Create $r$ attention matrices each drawing the length-preserved sequence $A[i, :]$, where $i \in \{0,1,\dots,r-1\}.$}
    Assign $(R^\prime_l)[i,j] = 1$ for $i \in [k(r+c)-1]_{\cup \{0\}}$ and $j= rk + (l-1) + c(i \mod k)$ to $R^\prime_l \in \mathbf{0}^{(k(r+c)) \times (k(r+c))}$, where $l \in [c]$.\; \tcp{\small  Similarly, create $c$ attention matrices each drawing the length-preserved sequence $B[:,j]$.}
    Let $M_{ll^\prime} = (R_l \circ (\prop(A)\concat\prop(B)))\otimes(R^\prime_{l^\prime} \circ (\prop(A)\concat\prop(B)))$ such that $l \in [r]$ and $l^\prime \in [c]$.\; \tcp{\small Multiply tokens from each row of $A$ with that of each column of $B$ and store the $rc$ sequences in $rc$ variables.}
    Assign $(R^{\prime\prime}_l)[i,j] = 1$ for $i = l-1$ and $j \in [k-1]_{\cup\{0\}}$ to $R^{\prime\prime}_l \in \mathbf{0}^{(k(r+c)) \times (k(r+c))}$, where $l \in [rc]$.\; \tcp{\small Create $rc$ attention matrices such that attention matrix $i$ focuses on first $\cdot$ positions of $i$\textsuperscript{th} row.}
    \Return $\sum_{l,l^\prime} R^{\prime\prime}_{\tilde{l}k+\tilde{l^\prime}+1} \circ  M_{ll^\prime}$, where $\tilde{l} = l-1$ and $\tilde{l^\prime} = l^\prime-1$.\;
    \tcp{\small Combine the sequences from line 4 with the attention matrices produced from line 5 to get $AB$, where the last $\cdot\:(r+c) - rc$ tokens are $0$.}
    \caption{\small Multiplication of matrices $A$ and $B$ of shape $r \times \cdot \:\text{ and } \cdot \times c$ respectively. \hspace{\fill} [see \autoref{lst:matmul}]}
    \label{algo:matmul}
\end{algorithm}

\begin{lemma}
\label{lem:matmul}
    There exists a transformer $T_{\boxtimes}^{(r,c)}$ multiplying matrices $A$ and $B$ of shape $r \times k$ and $k \times c$, for any $k \ge \frac{rc}{r+c}$.
\end{lemma}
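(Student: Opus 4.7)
The plan is to realize Algorithm~\ref{algo:matmul} as a RASP program operating on the one-dimensional representation of the inputs supplied by Proposition~\ref{1D}. Represent $A\in\R^{r\times k}$ and $B\in\R^{k\times c}$ as flat sequences of lengths $rk$ and $kc$ respectively, and treat their concatenation, of length $k(r+c)$, as the working sequence. Since every RASP operation is length-preserving, the $rc$ entries of $AB$ must fit inside a sequence of this length; this is precisely where the hypothesis $k \ge \tfrac{rc}{r+c}$, i.e.\ $k(r+c) \ge rc$, enters.

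I would proceed as follows. First, build $r$ \lstinline{select}/\lstinline{aggregate} pairs that, for each $i\in\{0,\dots,r-1\}$, extract the $i$-th row of $A$ (positions $\{ik,\dots,ik+k-1\}$ in the flattening) and broadcast it into the first $k$ coordinates of a length-preserving sequence, zero-padding the rest. Analogously, construct $c$ selectors extracting the $j$-th column of $B$ (positions $\{rk+j,\, rk+j+c,\, \dots,\, rk+j+(k-1)c\}$) into another length-preserving sequence. For each pair $(i,j)$, the elementwise product of the $i$-th row sequence and the $j$-th column sequence stores the summands $\{A[i,m]\,B[m,j]\}_{m=0}^{k-1}$ in its first $k$ positions and zeros elsewhere. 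A further \lstinline{select}/\lstinline{aggregate} attending uniformly to the first $k$ positions, rescaled by $k$ to convert RASP's average into a sum, produces the scalar $(AB)[i,j]$. A final selector relocates each of these $rc$ scalars to its intended output position ($ic+j$ under the $1$-D layout of $AB$), leaving the trailing $k(r+c)-rc$ coordinates at $0$. All indices used are affine functions of \lstinline{indices} and constants derivable from \lstinline{length}, so they lie inside RASP's expressive repertoire.

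The principal obstacle is bookkeeping rather than any deep analytic step. Because RASP has no loops, each of the roughly $r+c+rc$ row/column extractions, the $rc$ dot-product aggregations, and the final placement must be unrolled into distinct attention heads at constant depth, and one must verify that every index can be written as an arithmetic expression in \lstinline{indices} and \lstinline{length}. The length hypothesis $k \ge \tfrac{rc}{r+c}$ is what lets the final placement step succeed: below this threshold the $rc$ output slots would not fit inside the length-$k(r+c)$ working sequence, and the length-preserving discipline of RASP would rule the construction out. The explicit RASP program realising this scheme is presented in the appendix.
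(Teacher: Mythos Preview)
Your proposal is correct and follows essentially the same route as the paper: flatten and concatenate $A,B$ via Proposition~\ref{1D}, implement Algorithm~\ref{algo:matmul} by unrolling $r$ row-extractors, $c$ column-extractors, $rc$ elementwise products, and $rc$ placement heads in RASP, and invoke $k\ge rc/(r+c)$ exactly as the length-preservation constraint. The paper adds one small point you omit: a terminal feed-forward weight matrix $W\in\R^{m\times n}$ with $W[i][j]=\llbracket i\le n\wedge i=j\rrbracket$ to optionally contract away the trailing $k(r+c)-rc$ zero tokens, but this is cosmetic relative to the lemma as stated.
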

\begin{proof}
    Based on \autoref{1D}, let us consider that the input matrices $A \in \R^{r\times k}$ and $B \in \R^{k \times c}$, mapped to sequences, are concatenated one after another. The proof begins with constructing a pair of attention matrices (see Lines 1 and 2, \autoref{algo:matmul}) that are tailored to draw out targeted rows and columns of $A$ and $B$. The next step follows a similar $\circ$ operation, this time along with an element-wise scalar product to result in $rc$ replicates of $A[i, :]^{\top}B[:, j]$, $i \in [r-1]_{\cup\{0\}}$, $j \in [c-1]_{\cup\{0\}}$.\footnote{Equivalent to $A[i]$ from \autoref{sec:prelim}; we use $A[i, :]$ following standard matrix conventions.} Finally, one requires an attention matrix to sequentially pick up the product output tokens ($A[i, :]^{\top}B[:, j]$) under the operation $\circ$. The most crucial feature of this step in the construction (Line 5) is the reparametrization of the attention coordinates. On aggregation, the resultant array has $\prop(AB)$ as its first $rc$ tokens, with the rest ($k(r+c)-rc$) padded with $0$. The padded output is a direct consequence of the operation $\concat$ inherently violating the length preservation property while implementing multiplication if $k> rc/(r+c)$.
\end{proof}

To address this issue of redundant tokens (\autoref{algo:matmul}) appearing at the end of the sequence, we may incorporate a trivial attention mechanism in conjunction with a feed-forward network. This approach enables the contraction of a sequence with $m$ tokens into a shorter sequence of length $n$ (where $n<m$). To achieve this, a weight matrix $W^{m\times n}$ is employed within the final feed-forward sublayer such that $W[i,j] = 1$ when $i \le n$, and $i = j$; and $0$ otherwise. Having implemented the fundamental operations with transformers, we now present our main result. 

\begin{theorem}
\label{Th:1}
    There exists a transformer network $\mathcal{U}$ that, on any input $X$ of shape $(n\times d)$, can simulate any single-layer transformer attention $T$ of order $(n,d,d_v)$.
\end{theorem}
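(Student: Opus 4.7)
The plan is to build $\mathcal{U}$ by composing sub-transformers, each realising one of the building blocks established in \autoref{lem:transposition}--\ref{lem:matmul}, following the wiring depicted in \autoref{fig:U}. I would begin by invoking \autoref{1D} to serialise the encoded pair $\langle T, X\rangle$ — concretely the triple $(A, V, X)$ with $A\in\R^{d\times d}$, $V\in\R^{d\times d_v}$ and $X\in\R^{n\times d}$ — into a single flat sequence over which all subsequent RASP primitives operate, recording fixed offsets at which each flattened matrix begins so that later sub-transformers can address them independently.

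Given this serialised representation, I would carry out the attention formula $\sigma(XAX^\top)XV$ as a pipeline of sub-transformers: (i) apply \autoref{lem:matmul} to compute $XA$ (shapes $n{\times}d$ and $d{\times}d$) and, in parallel, $XV$ (shapes $n{\times}d$ and $d{\times}d_v$), the latter being carried forward untouched; (ii) apply \autoref{lem:transposition} to $X$ to obtain $X^\top\in\R^{d\times n}$; (iii) apply \autoref{lem:matmul} again to form $(XA)X^\top\in\R^{n\times n}$; (iv) apply \autoref{lem:softmax} row-wise to this matrix; (v) conclude with a final invocation of \autoref{lem:matmul} that multiplies the attention weights by the cached $XV$, yielding $T(X)$. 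Each sub-transformer is itself an explicit RASP program, and stacking RASP programs corresponds to stacking transformer encoder layers, so the resulting $\mathcal{U}$ is a transformer network of exactly the same fundamental kind as $T$.

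The main obstacle is dimensional bookkeeping between consecutive stages. \autoref{lem:matmul} requires the shared inner dimension $k$ to satisfy $k\ge rc/(r+c)$, which must be checked for every instantiation above, and it produces a flattened output with trailing redundant tokens; the contraction trick noted at the end of the proof of \autoref{lem:matmul} — a trivial attention head followed by a sparse feed-forward weight matrix — must therefore be inserted between stages to reshape the sequence length from $n\cdot d$ to $d\cdot n$ to $n\cdot n$, and so on, into the form expected by the next sub-transformer. Intermediate results consumed late in the pipeline, notably $XV$, must also be preserved across intervening layers; this is achieved by treating their positions as a frozen sub-sequence and choosing the per-layer query/key/value weights to act as the identity on them, relying on the fact that RASP permits per-position elementwise routing.

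Assembling the five blocks above into a single network $\mathcal{U}$ and unwinding the RASP-to-transformer translation of each lemma, one obtains a multi-layer transformer encoder such that $\mathcal{U}(\langle T, X\rangle) = \sigma(XAX^\top)XV = T(X)$ holds \emph{exactly}, with no approximation and independently of any training data, which establishes the theorem.
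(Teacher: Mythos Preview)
Your proposal is correct and follows essentially the same route as the paper: decompose $T(X)=\sigma(XAX^\top)XV$ into the pipeline of \autoref{fig:U} and realise each node with the corresponding lemma (\autoref{lem:transposition}, \autoref{lem:softmax}, \autoref{lem:matmul}), stacking the resulting RASP programs into a single transformer network $\mathcal{U}$. The paper's own proof is terser---it simply points to the figure and the three lemmas---whereas you additionally spell out the dimensional bookkeeping (the $k\ge rc/(r+c)$ check, the post-multiplication contraction, and the carrying of $XV$ across layers), but these are exactly the implementation details the paper defers to the lemma proofs and the discussion following the theorem.
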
 
\begin{proof}
    Observe that the restricted transformer attention $T$ is characterized by $A$ and $V$ such that it can be expressed as $\sigma\left(X AX^\top\right)XV$. The network $\mathcal{U}$ simulating $T$ on input $X$ takes input $X, A \text{ and } V$; and it can be constructed through a series of fundamental operations, each of which can be implemented using \autoref{algo:trans}-\ref{algo:matmul} specifically by $T_{\top}^{(n)}, T_{\sigma}^{(n)}, T_{\boxtimes}^{(n, d)}, T_{\boxtimes}^{(n, n)}, \text{ and } T_{\boxtimes}^{(n, d_v)}$. In addition to the schematic diagram of the required network $\mathcal{U}$ as presented in \autoref{fig:U}, \autoref{fig:simulationexample} provides a demonstrative simulation. 
\end{proof}

Note that the criteria in \autoref{lem:matmul} can be satisfied by $k \ge \min(r,c)$. In the context of \autoref{Th:1}, this requires $n \le \min(d, d_v)$. This not only aligns with the existing empirical scenarios where the sequence length $n$ is smaller than the representation dimensionality $d$ \citep{vaswani2017attention}, but it also renders the relation between hidden dimensions immaterial.  Additionally, the representational dimensions ($d$ and $d_v$) are often considered equal. In such scenarios, given that $n \operatorname{mod} 4 = 0$, the construction of $\mathcal{U}$ becomes entirely dependent on the sequence length, i.e., the number of input symbols. The reason being, given the RASP primitive \lstinline{length}, the provision to perform any arithmetic operation, and the value of $n$, we may deduce the value $d$. For example, to know the value of $d$ while multiplying $X$ and $A$, we may evaluate the expression $nd + d^2 = \text{\lstinline{length}}$.

\begin{figure}[!ht]
    \centering
\resizebox{\textwidth}{!}{
\begin{tikzpicture}[
    >=Stealth,
    node distance=1.5cm and 0.5cm,
    inputnode/.style={
        rectangle,
        draw=none,       
        fill=white,       
        line width=0.5pt, 
        inner sep=6pt,
        align=center,
        font=\scriptsize
    },
    opnode/.style={
        rectangle,
        draw=black,
        fill=white,
        line width=0.2pt,
        inner xsep=-2.5pt,
        inner ysep=0pt,
        font=\scriptsize
    },
    link/.style={
        ->,
        draw=black!80,
        rounded corners=2pt 
    }
]

\newcommand{\OpContent}[5]{
    \setlength\arrayrulewidth{0.2pt}
    \begin{tabular}{
        @{} 
        >{\centering\arraybackslash}m{2cm} @{\hspace{1.5pt}\vline\hspace{1.5pt}}
        >{\centering\arraybackslash}m{2.1cm} @{\hspace{1.5pt}\vline\hspace{1.5pt}}
        >{\centering\arraybackslash}m{1.4cm} 
        @{}
    }
        \tiny\texttt{\textcolor{blue!40!black}{\ \ #1}} & 
        \tiny $|\textcolor{blue!40!black}{\text{\texttt{#4}}} - \prop(\textcolor{green!40!black}{\text{\texttt{#4}}})| #5$ & 
        \tiny\texttt{\textcolor{green!30!black}{#2}} \\
        \noalign{\hrule height 0.2pt}
        \multicolumn{3}{@{}c@{}}{
            \rule{0pt}{2ex} 
            \tiny $ \textcolor{blue!40!black}{\text{\texttt{#4}}} = #3 $
            \rule{0pt}{1.5ex} 
        }
    \end{tabular}
}


\node[opnode] (OpMix) {
    \OpContent{Y1 = MatMul() ($\prop$(X)$\concat\prop$(A))}{Y1 = X @ A}{
        0.51, 0.66, 0.81, 0.65, 0.79, 0.93, 0.67, 0.83, 0.99
    }{Y1}{\approx 0.0}
};

\node[opnode, left=0.5cm of OpMix] (OpTrans) {
    \OpContent{XT = Transpose() ($\prop$(X))}{XT = X.T}{
        0.8, 0.1, 0.6, 0.2, 0.9, 0.3, 0.5, 0.4, 0.7
    }{XT}{ = 0}
};

\node[opnode, right=0.5cm of OpMix] (OpVal) {
    \OpContent{Y4 = MatMul() ($\prop$(X)$\concat\prop$(V))}{Y4 = X @ V}{
        0.53, 0.72, 0.8, 1.11, 0.42, 1.09, 0.71, 0.65, 0.93
    }{Y4}{\approx 0.0}
};


\node[inputnode, above=1cm of OpTrans] (InputX) {
    \textbf{Input $X$}\\
    $\begin{bmatrix} 
    .8 & .2 & .5 \\ .1 & .9 & .4 \\ .6 & .3 & .7 
    \end{bmatrix}$
};

\node[inputnode, above=1cm of OpMix] (InputA) {
    \textbf{Input $A$}\\
    $\begin{bmatrix} 
    .1 & .2 & .3 \\ .4 & .5 & .6 \\ .7 & .8 & .9 
    \end{bmatrix}$
};

\node[inputnode, above=1cm of OpVal] (InputV) {
    \textbf{Input $V$}\\
    $\begin{bmatrix} 
    .1 & .7 & .4 \\ 1.0 & .3 & .9 \\ .5 & .2 & .6 
    \end{bmatrix}$
};


\node[opnode] (OpScore) at ($(OpTrans.south)!0.5!(OpMix.south) + (0,-1.1)$) {
    \OpContent{Y2 = MatMul()(Y1$\concat$XT)}{\hspace*{-8mm}Y2 = Y1 @ X.T}{
        0.945, 0.969, 1.071, 1.143, 1.148, 1.278, 1.197, 1.21, 1.344
    }{Y2}{\approx 0.0}
};

\node[opnode, below=0.4cm of OpScore] (OpSoft) {
    \OpContent{Y3 = Softmax()(Y2)}{\hspace*{-8mm}Y3 = softmax(S)}{
        0.317, 0.324, 0.359, 0.318, 0.319, 0.363, 0.315, 0.319, 0.365
    }{Y3}{\approx 3.376358e-06}
};

\node[opnode] (OpFinal) at ($(OpSoft.south)!0.5!(OpVal.south) + (0,-2.7)$) {
    \OpContent{Y = MatMul()(Y3$\concat$Y4)}{\hspace*{-8mm}Y = Y3 @ Y4}{
        0.783, 0.598, 0.941, 0.780, 0.599, 0.940, 0.781, 0.599, 0.940
    }{Y}{\approx 3.040329e-07}
};

\draw[link] (InputX.south) -- (OpTrans.north);
\draw[link] (InputX.south) -- ++(0,-0.4) [rounded corners=5pt] -| (OpMix.30);
\draw[link] (InputA.south) -- (OpMix.north);

\draw[link] (InputX.south) -- ++(0,-0.4) -| (OpVal.150);
\draw[link] (InputV.south) -- (OpVal.north);

\draw[link] (OpTrans.south) -- ++(0,-0.3) -| (OpScore.160);
\draw[link] (OpMix.south) -- ++(0,-0.3) -| (OpScore.30);

\draw[link] (OpScore.south) -- (OpSoft.north);

\draw[link] (OpSoft.south) -- ++(0,-0.5) -| (OpFinal.160);
\draw[link] (OpVal.south) -- ++(0,-0.5) -| (OpFinal.30);


\node[
    rectangle,
    draw=black!50,
    dashed,
    fill=white,
    rounded corners=2pt,
    inner sep=4pt,
    anchor=west,
    font=\tiny,
] (Legend) at ($(OpFinal.east) + (0.15, 1.9)$) {
    \color{black!60}
    \begin{tabular}{@{}p{8mm} p{2.5cm}@{}}
        \multicolumn{2}{@{}p{2.5cm}}{Our Implementation} \\
        \hline
        \rule{0pt}{2.2ex}\texttt{\textcolor{blue!40!black}{Transpose}} & $\to$ \autoref{lst:transpose} \\
        \texttt{\textcolor{blue!40!black}{Softmax}} & $\to$  \autoref{lst:softmax} \\
        \texttt{\textcolor{blue!40!black}{MatMul}} & $\to$  \autoref{lst:matmul} \\[0.8ex]
        \multicolumn{2}{@{}p{2.5cm}}{NumPy Implementation} \\
        \hline
        \texttt{\textcolor{green!40!black}{T}} & $\to$ Predefined Property \\
        \texttt{\textcolor{green!40!black}{Softmax}} & $\to$  Custom Function \\
        \texttt{\textcolor{green!40!black}{@}} & $\to$  Predefined Operator\\[0.8ex]
        \multicolumn{2}{@{}l}{Find at} \\
        \hline
        \multicolumn{2}{@{}p{3.7cm}}{\fontsize{5pt}{5pt}\selectfont \url{https://anonymous.4open.science/r/TMA/Demo/README.md}}
    \end{tabular}
};
\end{tikzpicture}
}
    \caption{A demonstrative simulation comparing the transformer network $\mathcal{U}$ (vide \autoref{Th:1}) against standard linear algebra operations via NumPy. Note that, the negligible error (maximum of absolute difference) appearing in the Softmax calculation (\texttt{\textcolor{blue!40!black}{Y3}}), along with the near-zero error in calculation of matrix multiplications \iffalse propagating to the final aggregation (\texttt{\textcolor{blue!40!black}{Y}}) \fi are attributed to the fixed-precision constraint (three significant digits) applied to the exponential function in the simulation and floating-point representation of numbers in NumPy respectively.}
    \label{fig:simulationexample}
\end{figure}

\begin{corollary}
\label{col:enc}
    There exists a transformer network $\mathcal{U}$ that can simulate any single-layer transformer encoder $T$ of order $(n, d, d_v, d_1, d_2)$.
\end{corollary}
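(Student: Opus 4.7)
The plan is to treat \autoref{col:enc} as a direct composition on top of \autoref{Th:1}, adding the feed-forward sublayer. Recall from \autoref{ssec:transenc} that a single-layer encoder first produces the attention output $Y = \sigma(XAX^\top)XV \in \R^{n\times d_v}$ and then applies a feed-forward sublayer with parameters $W_1 \in \R^{d_v\times d_1}$, $W_2 \in \R^{d_1\times d_2}$, yielding $\operatorname{ReLU}(YW_1)W_2 \in \R^{n\times d_2}$. So the task reduces to showing that, on top of the already-constructed attention simulator, we can realise two additional matrix multiplications and a single elementwise nonlinearity, all within the RASP-expressible transformer regime established earlier.

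The construction proceeds as a pipeline. First, I would invoke \autoref{Th:1} to obtain $Y$ from the inputs $X$, $A$, $V$, thereby inheriting the one-dimensional array representation from \autoref{1D}. Next, a second invocation of \autoref{lem:matmul} on the pair $(Y, W_1)$ produces $YW_1$. The $\operatorname{ReLU}$ activation is an elementwise map $x \mapsto \max(x, 0)$, which is directly realisable by the elementwise arithmetic primitives of RASP that underpin the terminal feed-forward layers of a transformer, so no new attention-based construction is needed for this step. A third invocation of \autoref{lem:matmul} on the output of the ReLU and $W_2$ completes the simulation. Since \autoref{lem:matmul} already provides the trailing contraction scheme that absorbs the redundant $\cdot(r+c)-rc$ zero tokens at the tail of the concatenated representation, the intermediate shapes emerge cleanly in the exact layout required by the next stage.

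The only real point of care, rather than a deep obstacle, is verifying that the dimensional constraint of \autoref{lem:matmul} (equivalently, $k \geq \min(r,c)$ for an $r\times k$ by $k\times c$ product) is met at each stage: for $YW_1$ this amounts to $d_v \geq \min(n, d_1)$, and for the product with $W_2$ to $d_1 \geq \min(n, d_2)$. Both hold in the usual architectural regime $n \leq \min(d, d_v, d_1, d_2)$ that was already noted after \autoref{Th:1}, aligning with standard transformer practice. Once these interfaces are aligned, the corollary follows by chaining \autoref{Th:1} with two further applications of \autoref{lem:matmul} and an elementwise $\operatorname{ReLU}$, with no additional constructs required.
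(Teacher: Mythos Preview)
Your proposal is correct and follows essentially the same route as the paper: invoke \autoref{Th:1} for the attention sublayer, realize $\operatorname{ReLU}$ as a RASP elementwise operation, and then apply \autoref{lem:matmul} twice for the feed-forward projections $W_1$ and $W_2$. Your added discussion of the dimensional side-conditions on \autoref{lem:matmul} is a welcome elaboration the paper leaves implicit.
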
 
\begin{proof}
    The characterizing parameters of a transformer encoder from the class of restricted transformer contain two additional matrices. Let $W_1$ and $W_2$ specify the linear projections within the feed-forward sublayer, in addition to the attention and value matrices characterizing $T$'s self-attention sublayer. That is $T(X) = \operatorname{FFN}\left(X + \sigma\left(X AX^\top\right)XV\right)$ where, $\operatorname{FFN}(X) = \operatorname{ReLU}\left(X W_1\right)W_2$.\\
    Besides implementing the operation sum in the residual connection similar to line 3 of \autoref{algo:softmax}, we can realize the activation $\operatorname{ReLU}$ (see \autoref{lst:relu}). Now, continuing from \autoref{Th:1},  rest of the operations can be simulated using an application of \autoref{algo:matmul}.
\end{proof}

\begin{remark}[Simulating SMAT using AHAT]
\label{rem:ahattosmat}
    The theorem ensures the existence of a unified network capable of simulating certain computational models while maintaining parity with the models under simulation. Precisely, we have employed average hard attention (see GAHAT in \autoref{sec:prelim}) to mimic $\softmax$-activated attentions. As a consequence, we can utilize our AHAT network for problems including $\text{Match}_2$, known until now to be only learnable using single-layer single-head SMATs (see \autoref{sec:1}).
\end{remark}
\begin{remark}[Simulating Linformer and Linear Attention]
\label{rem:linformer}
    As long as the characterizing matrices of the transformers are involved with matrix multiplication (e.g., Linformer (\ref{eq:linformer})) and the function $\Pool$ is implementable using GAHAT or RASP (e.g., linear attention (\ref{eq:linatt})), the \autoref{Th:1} and \autoref{col:enc} can be applied to achieve a transformer network $\mathcal{U}$ simulating them.
\end{remark}

Remarkably, one may follow an alternative approach to proving the representational capacity of $\mathcal{U}$ by showing that it realizes operations such as (\ref{eq:inv}) (see Appendix \ref{ssec:inverse}). The proof involves altering the construction of $\mathcal{U}$ by introducing final attention parameters that adapt to the input $\langle T, X\rangle$. It is crucial since, in the process, we show the existence of a transformer that inverts non-singular matrices of fixed orders. See Appendix \ref{ssec:inverse} for a contextual discussion. 

\section{Discussion on Generalization}
\label{sec:discussion}
\begin{wraptable}[11]{r}{0.4\textwidth}
    \centering
    \vspace{-1\intextsep}
    \caption{The computational cost of construction associated with the operations and whether they are dependent on the order of the input matrices.}
    \resizebox{0.4\textwidth}{!}{
    \begin{tabular}{l|c|c}
    \toprule
        Operation & Input Dependency & Cost\\ \midrule
        Transposition & \cmark & $O(1)$\\
        $\operatorname{softmax}$ & \cmark & $O(r)$\\
        Multiplication & \cmark & $O(rc)$\\
        $\operatorname{MaxMin}$ & \xmark & $O(1)$\\
    \bottomrule
    \end{tabular}}
    \label{tab:1}
\end{wraptable}

Let us first analyze the complexity of the constructions given above. We define the \emph{width} of a single encoder layer as the count of attention heads it contains. To extend this definition to multi-layer encoders, we define the width as the maximum width among all its constituent single-layer encoders. The shortcoming that makes the \autoref{algo:matmul} lengthy stems from explicitly mentioning the $r+c+rc$ attention matrices. Even with classical implementation of matrix multiplication, where $C[i_0,i_1] = \sum_{i=1}^k A[i_0,i]B[i,i_1]$, taking $O(rkc)$ time, it does not resolve the issue, but rather follows the same in the scope of variable renaming facility. In contrast, since our constructed transformer $T_{\boxtimes}^{(r,c)}$ assumes attention being one of the basic operations and thus is an $O(1)$ operation, matrix multiplication costs $O(2rc)$ number of operations. Similarly, the computation cost for \autoref{algo:softmax} and \autoref{algo:trans} for an order-$r$ matrix is $O(r)$ and $O(1)$, respectively. For each algorithm, the construction of the transformers ensures that their depth is not a function of the input; however, for most cases, the width \emph{is} {--} a comprehensive view has been presented in \autoref{tab:1}.

\begin{corollary}
\label{col:expressive_hierarchy}
Let $\mathcal{U}_{(n,d,d_v)}$ and $\mathcal{U}_{(m,e,e_v)}$ be transformer networks as defined in \autoref{Th:1}, where 
    i) $\mathcal{U}_{(n,d,d_v)}$ simulates single-layer transformers with characterizing matrices $A \in \mathbb{R}^{d \times d}$ and $V \in \mathbb{R}^{d \times d_v}$, and inputs $X \in \mathbb{R}^{n \times d}$,
    ii) $\mathcal{U}_{(m,e,e_v)}$ is defined analogously for dimensions $m$, $e$, and $e_v$.
If $n \geq m$, $d \geq e$, and $d_v \geq e_v$, then $\mathcal{U}_{(n,d,d_v)}$ is at least as expressive as $\mathcal{U}_{(m,e,e_v)}$. Specifically, any computation performed by $\mathcal{U}_{(m,e,e_v)}$ can be exactly simulated by $\mathcal{U}_{(n,d,d_v)}$.
\end{corollary}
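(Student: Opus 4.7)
The plan is to show the expressivity inclusion by exhibiting, for every input $\langle T', X'\rangle$ processed by $\mathcal{U}_{(m,e,e_v)}$, a padded input $\langle T,X\rangle$ on which $\mathcal{U}_{(n,d,d_v)}$ produces the same result inside a designated sub-block. First I would set up a natural block-embedding: place $X'$ in the top-left $m\times e$ sub-block of an otherwise zero matrix $X\in\mathbb{R}^{n\times d}$, and likewise embed the characterizing matrices $A'\in\mathbb{R}^{e\times e}$ and $V'\in\mathbb{R}^{e\times e_v}$ into the top-left corners of zero-initialized $A\in\mathbb{R}^{d\times d}$ and $V\in\mathbb{R}^{d\times d_v}$. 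A direct block-matrix calculation then shows that $X^{\top}$, $XA$, $XAX^{\top}$, and $XV$ all retain their non-zero mass inside their top-left sub-blocks and there reduce to $X'^{\top}$, $X'A'$, $X'A'X'^{\top}$, and $X'V'$ respectively, independently of how the remaining padded entries of $A$ are chosen.

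Next I would lift this observation through the three fundamental operations that Theorem~\ref{Th:1} pipelines in Figure~\ref{fig:U}. Transposition (Lemma~\ref{lem:transposition}) and matrix multiplication (Lemma~\ref{lem:matmul}) act linearly and entry-wise on the flattened sequence representation furnished by Proposition~\ref{1D}, so the RASP pseudocodes in Algorithms~\ref{algo:trans} and~\ref{algo:matmul} commute with the block-embedding. The step I expect to be the main obstacle is $\softmax$: as Algorithm~\ref{algo:softmax} exponentiates every entry and sums each row over all $n$ columns, the $(n-m)$ zero entries in every active row contribute $(n-m)$ extra copies of $e^{0}=1$ to the denominator, distorting the normalisation precisely where the embedding needs to stay transparent.

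To resolve this I would invoke the dimension-awareness observation that the paragraph following Theorem~\ref{Th:1} makes explicit: the RASP primitive \lstinline{length} together with arithmetic relations such as $nd+d^{2}=\text{\lstinline{length}}$ let the simulator recover the effective matrix orders $(m,e,e_v)$ from the encoding of $\langle T,X\rangle$ itself. Using this, the selector created in line~3 of Algorithm~\ref{algo:softmax} can be restricted to the active $m$ columns and the row-sum in line~4 correspondingly truncated, yielding exactly the softmax that $\mathcal{U}_{(m,e,e_v)}$ would have produced on $X'A'X'^{\top}$. Composing these per-operation identities along the data-flow of Figure~\ref{fig:U}, the padded computation terminates in an $n\times d_v$ matrix whose top-left $m\times e_v$ sub-block equals $T'(X')$ identically, which is precisely $\mathcal{U}_{(m,e,e_v)}(\langle T',X'\rangle)$, establishing the claimed hierarchy.
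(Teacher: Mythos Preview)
Your core construction matches the paper's: block-embed $(X',A',V')$ into zero-padded matrices of the larger shapes and run $\mathcal{U}_{(n,d,d_v)}$ on the result. The paper pads $A$ with an identity block in the lower-right corner whereas you pad with zeros, but as you note this is immaterial since the zero rows of $\widetilde{X}$ annihilate that region in $\widetilde{X}\widetilde{A}\widetilde{X}^\top$ anyway. Where you go further than the paper is in isolating the $\softmax$ obstruction: the paper's Step~3 simply asserts that the padded attention ``reduces to the original computation $T(X)$ in the upper-left $m\times e_v$ block,'' without confronting the fact that each active row of $\widetilde{X}\widetilde{A}\widetilde{X}^\top$ carries $n-m$ extra zeros whose exponentials inflate the row-wise denominator by $n-m$. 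Your diagnosis here is sharper than the paper's.

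Your proposed repair, however, has a gap of its own. You invoke the dimension-recovery trick from the paragraph after Theorem~\ref{Th:1}, but that argument solves $nd+d^2=\text{\lstinline{length}}$ for the dimensions of the \emph{padded} sequence; after embedding, \lstinline{length} reflects $(n,d,d_v)$, not the effective orders $(m,e,e_v)$, and zero-padded entries are indistinguishable from genuine zero data. The selectors in Algorithm~\ref{algo:softmax} therefore have no principled way to truncate the row-sum to the first $m$ columns unless $(m,e,e_v)$ is carried explicitly in the encoding of $\langle T,X\rangle$ --- e.g., as header tokens or a delimiter --- which neither you nor the paper supplies. To close the argument you would need to specify such an encoding convention and show that the fixed network $\mathcal{U}_{(n,d,d_v)}$ can read it.
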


The corollary signifies the notion of hierarchy in simulation power. Our construction of a suitable $\mathcal{U}$, as discussed after \autoref{Th:1}, ensures the existence of a computational model that can simulate any single-layer transformer attention with a given number of heads based on input $X$. We highlight that it is $X$ that dictates the width of $\mathcal{U}$, whose depth remains independent of the input. As such, the construction hinges solely on the sequence length $n$. For a sufficiently large $N$, we can inductively construct and hence prove the existence of a network, say $\mathcal{U}_{(N)}$ that can simulate arbitrary attention (or, even transformer when extended with the feed-forward component) on input having length, say $n \le N$ {--} thus making it universal. The constructive proof has been provided in Appendix \ref{ssec:proofofhierarchy}. In the absence of a theoretical lower bound on the allowable number of heads, we can only ensure that the dependence underlying our model follows the principle of parsimony, given the natural hierarchy among simulators.

\begin{remark}[Sparsification]
    Note that the $\operatorname{poly}(n)$ complexity underlying our construction of $\mathcal{U}$ stems from the definition of vanilla encoders, and does not contribute to inflation of ambient sequence length. \autoref{Th:1} only requires $n \leq \min(d, d_v)$, which conforms to the convention in \citet{vaswani2017attention}. Remarkably, our approach also conforms to sparsification of pairwise token interactions, namely, methods that involve pooling to achieve appropriate compression and low-rank attentions, e.g., Linformer \citep{wang2020linformer}, Performer \citep{choromanski2021rethinking}, and Sumformer \citep{alberti2023sumformer}. This becomes crucial in mitigating the commonly encountered issue of token explosion. Linformer approximates the self-attention mechanism by a low-rank matrix \textemdash the lower rank ($k$) being prescribed based on the Johnson-Lindenstrauss Lemma \textemdash to achieve a complexity of $O(n)$. Meanwhile, Performer replaces the usual non-linearity by introducing kernels for pooling. Under Gaussian kernels, the complexity can be made as low as $O(nkd)$. Sumformers consolidate all of the above models in universally approximating sequence-to-sequence permutation-equivariant continuous functions. Our construction can be used to represent all such models as long as the underlying pooling operations are representable (see \autoref{rem:linformer}).
\end{remark}

In the purview of \autoref{lem:softmax}, we also extend the encoder's expressivity onto a larger class of activations. First, suppose $S$ is a sequence of length $gk$ and $\rho_g$ is a permutation, where $g,k$ are positive integers. Thus, $\rho_g(S)$ is the $g$-sorted sequence of $S$ such that $\rho_g(S)[i]\ge\rho_g(S)[i+1]\ge\cdots\ge\rho_g(S)[i+g-1]$ for all $i$ that are multiple of $g$. This permutation is often called a $\operatorname{GroupSort}$ of group size $g$. When $g=2$, this is widely known as the $\operatorname{MaxMin}$ operation \citep{anil2019sorting}. 

\begin{lemma}
    \label{lem:MaxMin}
    There exists a transformer $T_{\rho_{2}}$ implementing the operation $\operatorname{MaxMin}$ on a sequence $S$ (of even length), i.e., $T_{\rho_{2}}(S) = \rho_{2}(S)$.
\end{lemma}
\begin{proof}
    In the context of our discussion, sequences may be understood as $\prop(A)$, where $A$ is an input matrix of arbitrary order. Moreover, similar to most activation implementations, $\operatorname{MaxMin}$ is length-preserving. The key observation in constructing a feasible attention matrix (\autoref{algo:maxmin}, Line 2), that upon $\circ$ yields the sorted sequence, is the encoding of tokens according to their mutual order. Note that the construction does not readily extend to arbitrary grouping sizes, and for our proof, it is essential to have $|S|=2p$, $p \in \mathbb{N}$ (even order in case of $\prop(\cdot)$). Remarkably, since the number of attentions does not depend on the input, the AHA-realization is only $O(1)$-costly. 
\end{proof}

\begin{remark}[Approximating Lipschitz functions]
    The first reason behind \autoref{lem:MaxMin} being important is that, by representing $\operatorname{MaxMin}$, $\mathcal{U}$ can express a vector $p$-norm preserving transform, $p \geq 1$. As such, recalling that $\mathcal{U}$ also simulates affine matrix operations (multiplication), it can represent an $L$-deep feed-forward network $z^{(\ell)} \coloneqq W^{(\ell)} \operatorname{MaxMin}(z^{(\ell-1)}) + b^{(\ell)}$, where $W^{(\ell)} \in \mathbb{R}^{n_{\ell} \times n_{\ell-1}}, b^{(\ell)} \in \mathbb{R}^{n_{\ell}}$, given that ${||W^{1}||}_{2, \infty} \leq 1$, $\max\{{||W^{(\ell)}||}_{\infty}\}_{\ell=2}^{L} \leq 1$ and $\max\{{||b^{(\ell)}||}_{\infty}\}_{\ell=1}^{L} \leq \infty$. In case the input vectors $z^{0}$ are constrained to a compact subset $Z \subseteq \mathbb{R}^{n_{0}}$ and $n_{L}=1$, the simulated outputs are dense in ${Lip}_{1}(Z)$ \citep{tanielian2021approximating}. This presents a new proof showing that transformer encoders are universal approximators of Lipschitz and H\"older-smooth functions. Moreover, following \autoref{lem:MaxMin}, $\mathcal{U}$ exactly represents $\operatorname{ReLU}, \operatorname{Leaky ReLU}$ and $\operatorname{Maxout}$ activations \citep{anil2019sorting}. The result extends to $\operatorname{GeLU}$-activated networks given the approximation of $\operatorname{GeLU}$ (Lipschitz-smooth with associated constant 1.0998) using $\operatorname{ReLU}$ (\citet{feng2023towards}, Lemma C.2).
\end{remark}

\begin{algorithm}
    \SetAlgoLined\SetAlgoNoLine\SetNlSty{}{}{}\LinesNumbered\RestyleAlgo{ruled}\DontPrintSemicolon\SetKwComment{tcp}{\textcolor{blue}{$\vartriangleright$ }}{}\SetCommentSty{texttt}
    \KwIn{Sequence $S = \{S[i]\}_{i\in [|S|-1]_{\cup \{0\}}}$, where $|S| = 2p, p\in \N$.}
    Assign $\alpha[i] = 1$ (and $\alpha[i+1] = 1$) if $S[i]<S[i+1]$ to $\alpha \in \{0\}^{|S|}$, where $i \in [|S|-2]_{\cup \{0\}}$ and $i\mod 2 = 0$.\; \tcp{\small Indicates the positions of the pairs of elements that are ordered (ascending).} 
    Assign $R[i,i] = 1-\alpha[i] = R[i+1, i+1]$ and  $R[i, i+1] = \alpha[i] = R[i+1, i]$ to $R \in \mathbf{0}^{|S|\times |S|}$, where $i\mod 2 = 0$. \;    \tcp{\small Creates an attention matrix $R$ with diagonal blocks $\mathbf{1}_2 - \mathbf{I}_2$ (or, $\mathbf{I}_2$) depending on whether $\alpha[i] \text{ and } \alpha[i+1]$ are both $1 (\text{or, }0)$ for any even $i$.}
    \Return $R \circ S$. \hfill {\textcolor{blue}{$\vartriangleright$} \texttt{\small Return the tokens of $S$ after passing through $R$.}}\;
    \caption{\small Applying $\operatorname{MaxMin}$ sort on any sequence $S$. \hspace{\fill} [see \autoref{lst:MaxMin}]}
    \label{algo:maxmin}
\end{algorithm}

To further generalize the construction of $\mathcal{U}$, let us now work on the multi-head extension.

\begin{lemma}
    \label{lem:multihead}
    Suppose $\f$ is an $n$-ary operation. Then, there exists a transformer $T_{h\f}$ realizing $\concat\limits_{h=1}^{H}\Big(\f\limits_{i=1}^{n}X_i^{(h)}\Big)$ on input $\concat\limits_{i=1}^{n}\Big(\concat\limits_{h=1}^{H}X_i^{(h)}\Big)$, if there is a transformer $T_{\f}$ realizing the operation $\f$ on input $\concat\limits_{i=1}^{n}X_i$, where $\concat$ denotes concatenation.
\end{lemma}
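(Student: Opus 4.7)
The plan is to decompose the construction into two stages carried out inside a single encoder: a permutation that regroups the input from its interleaved form $\concat\limits_{i=1}^{n}\concat\limits_{h=1}^{H}X_i^{(h)}$ into the head-grouped form $\concat\limits_{h=1}^{H}\concat\limits_{i=1}^{n}X_i^{(h)}$, followed by the parallel, block-local execution of the $H$ prescribed transformers $T^{(1)},\ldots,T^{(H)}$. After these two stages the sequence already bears the required form $\concat\limits_{h=1}^{H}\big(\f\limits_{i=1}^{n}X_i^{(h)}\big)$, so no further rearrangement is necessary.

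For the permutation, I would reuse the index-reflection idea already deployed in \autoref{lem:transposition}. If each $X_i^{(h)}$ occupies a contiguous window of fixed length $s$, the first coordinate of $X_i^{(h)}$ sits at position $\big((i-1)H+(h-1)\big)\cdot s$ in the input, while in the desired layout it should sit at $\big((h-1)n+(i-1)\big)\cdot s$. Since RASP grants access to \texttt{indices} and \texttt{length} and permits arithmetic over them, both index maps are computable and a single \texttt{select}/\texttt{aggregate} pair realises the attention-based permutation in one layer. After this layer the sequence is partitioned into $H$ contiguous blocks of length $ns$ each, with the $h$-th block being exactly the input on which $T^{(h)}$ is meant to act.

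For the parallel stage, I would compose the $H$ given transformers side by side in a single encoder by taking the disjoint union of their heads and feed-forward components, and augmenting every selector of $T^{(h)}$ by the elementwise predicate $\lfloor i/(ns)\rfloor=\lfloor j/(ns)\rfloor$ which confines attention to the same block. This predicate is itself RASP-expressible and simply conjoins with the original selectors; its overall depth is $1+\max_{h}\operatorname{depth}(T^{(h)})$. The main obstacle I anticipate is certifying that this block-membership guard leaves each $T^{(h)}$'s intra-block behaviour untouched. This reduces to the observation that whenever $i,j$ lie in the same block the augmented predicate coincides with the original one, so no query-key interaction that $T^{(h)}$ would have used in isolation is lost, and because the guard is false across blocks, no spurious cross-block interaction is introduced. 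Hence the output of the composite encoder, restricted to the $h$-th block, equals $\f(X_1^{(h)},\ldots,X_n^{(h)})$, and concatenating the blocks yields the lemma.
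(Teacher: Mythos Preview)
Your two-stage plan---permute the interleaved input into head-grouped form, then run the $H$ transformers block-locally---is precisely the paper's strategy, and your single \texttt{select}/\texttt{aggregate} permutation is equivalent to the paper's \textsf{identify}+\textsf{shift} combination. For the parallel stage the paper uses a slightly different mechanism: it applies \textsf{identify} to zero out all but the $h'$-th block, runs a copy of $T^{(h')}$ (with \texttt{indices} shifted by the block offset) on that padded sequence, and finally elementwise-sums the $H$ resulting sequences. Your block-membership conjunction is a tidier device achieving the same isolation.

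There is, however, one point to patch. Your claim that ``whenever $i,j$ lie in the same block the augmented predicate coincides with the original one'' tacitly assumes that each $T^{(h)}$'s selectors are shift-invariant, which in general they are not: a selector such as \texttt{select(indices, k, <)} copied verbatim into block $h>1$ becomes, after conjunction with your guard, vacuously false rather than picking out the first $k$ positions of that block. The guard prevents cross-block leakage but does not rebase absolute-index references. The paper handles this explicitly by offsetting every occurrence of \texttt{indices} inside $T^{(h)}$ by $(h-1)\cdot ns$; your ``disjoint union'' needs the same rebasing, and once it includes that, the argument goes through.
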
 
\begin{proof}
    If the construction for operation $\f$ is independent of the input, $T_{h\f}=T_{\f}$ for any $h$, e.g., $\operatorname{MaxMin}$. Otherwise, we provide an explicit construction of such a transformer $T_{h\f}$. A transformer can implement the following operations:
    \begin{itemize}[leftmargin=*,itemsep=0pt, topsep=0pt]
        \item \textsf{identify}: A contiguous subsequence $s^\prime = \{s_i\}_{0 \le i^\prime-1 \le i < i^\prime+k \le |s|-1}$ from a sequence $s = \{s_i\}_{i\in [|s|-1]_{\cup \{0\}}}$ can be identified to produce a length-preserved sequence $0\dots0s_{i-1} \dots s_{i+k-1}0\dots0$. Assign $R[i, j] = 1$ for $0 \le i^\prime-1 \le i < i^\prime+k \le |s|-1$ and $j =i$ to $R \in \mathbf{0}^{|s|\times|s|}$. Then, $R \circ s$ performs \textsf{identify}. The corresponding RASP code is
        \lstinline{clip = select(indices, indices, ==)} \lstinline{and} \lstinline{select(indices, i-1, >=) and select(indices, i+k-1, <=);}
        \lstinline{aggregate(clip, tokens);}. Line 3 (and 2 \& 3) of \autoref{algo:softmax} (and \ref{algo:matmul}) reminisce the property.
        \item \textsf{shift}: A cyclic permutation $\rho_t$ can be performed, such that, $\rho_t(\{s_i\}_{i\in [|s|-1]_{\cup \{0\}}}) =  \{s_{(i+t)\mod |s|}\}_{i\in [|s|-1]_{\cup \{0\}}}$ on the sequence $s=\{s_i\}_{i\in [|s|-1]_{\cup \{0\}}}$. Assign $R[i, j] = 1$ for $i\in [|s|-1]_{\cup \{0\}}$ and $j = (|s| - t + i ) \mod |s|$ to $R \in \mathbf{0}^{|s|\times|s|}$. Then, $R \circ s$ performs \textsf{shift}. The corresponding RASP code is \lstinline{aggregate(select(indices, (indices+t)
    \end{itemize}
    Now to construct $T_{h\f}$, let us first apply the \textsf{identify} and \textsf{shift} operation to permute the input sequence $\concat\limits_{i=1}^{n}\Big(\concat\limits_{h=1}^{H}X_i^{(h)}\Big)$ to $\concat\limits_{h=1}^{H}\Big(\concat\limits_{i=1}^{n}X_i^{(h)}\Big)$. The operations of $T_{h\f}$ would then copy the operations from $T_{\f}$ with possible modification in \lstinline{indices}, which is in $T_{h\f}$, is at an offset $n\times (h^\prime-1)$. When applied on the sequence $\concat\limits_{h=1}^{h^\prime-1}\Big(\concat\limits_{i=1}^{n}0\Big)\concat\limits_{i=1}^{n}X_i^{(h^\prime)}\concat\limits_{h=h^\prime+1}^{H}\Big(\concat\limits_{i=1}^{n}0\Big)$, this would produce $\concat\limits_{h=1}^{h^\prime-1}\Big(\concat\limits_{i=1}^{n}0\Big)\f\limits_{i=1}^{n}X_i^{(h^\prime)}\concat\limits_{h=h^\prime+1}^{H}\Big(\concat\limits_{i=1}^{n}0\Big)$. Then just adding up all such $H$ sequences would produce $\concat\limits_{h=1}^{H}\Big(\f\limits_{i=1}^{n}X_i^{(h)}\Big)$. Note that if a transformer $T_{\f}$ requires using the FFN (e.g., the matrix multiplication), $T_{h\f}$ can also construct weight matrices for the FFN with possible modifications to cater only to the required portions of the produced sequence.
\end{proof}
The purpose of this lemma is to prove that given an operation $\f$ implementable by a transformer, another transformer can be constructed that can independently perform $\f$, say, $H$ times, without mutual interference. 

\begin{theorem}
    \label{thm:U_multihead}
    There exists a transformer network $\mathcal{U}$ that, on any input $X$, can simulate any single-layer $H$-head transformer attention $T$ of order $(n, d, d_v)$. 
\end{theorem} 
\begin{proof}
    Keeping congruence to the input provided to multihead attentions by  \citeauthor{vaswani2017attention} (Subsection 3.2.2), we assume that the characterizing matrices have been stacked one after another, i.e., $\Big(\concat\limits_{h=1}^{H}X^{(h)}\Big)\concat\Big(\concat\limits_{h=1}^{H}A^{(h)}\Big)\concat\Big(\concat\limits_{h=1}^{H}V^{(h)}\Big)$, where, $X^{(h)}\in\R^{n\times d}, A^{(h)}\in\R^{d\times d}$ and $V^{(h)}\in\R^{d\times d_v}$ is the input to network $\mathcal{U}$. Thus, the construction of $\mathcal{U}$ follows from \autoref{lem:multihead} and \autoref{lem:transposition}-\ref{lem:matmul}.
\end{proof}
Evidently, the result also extends to the entire transformer.
\begin{corollary}
    \label{col:U_multihead}
    There exists a transformer network $\mathcal{U}$ that, on any input $X$, can simulate any single-layer $H$-head transformer $T$ of order $(n, d, d_v, d_1, d_2)$. 
\end{corollary}
\begin{remark}
\label{rem:multihead}
    With the simulation of multi-head transformers, an architecture can be realized through explicit construction for the problems which are known to be learnable using single-layer multi-head transformers, e.g., $k$-\textsf{PARITY}. Note that in terms of RASP, a residual connection is only an elementwise sum. Accordingly, for the task of recognizing the language \textsf{PARITY}, the two-layer $\operatorname{softmax}$ encoder architecture proposed by \citeauthor{chiang2022overcoming} can be realized using average hard attention by employing two serially connected $\mathcal{U}$ networks.
\end{remark}

\section{Conclusion}
\label{sec:conclusion}
We present for the first time an exact, data-agnostic construction of a universal simulator that replicates the behavior of single-layer transformer encoders, including multi-head attention and non-linear feed-forward components. Central to our construction is the implementation of key linear algebraic operations and a wide-range of activation functions, all within the constant-depth constraint of transformer architecture. Our results demonstrate that while such structure precludes simulation of arbitrary encoder configurations, a hierarchical construction exists wherein simulators of higher-order subsume lower-order models. Crucially, extending this to multi-head attention as in \autoref{thm:U_multihead} ensures the existence of a universal simulator $\mathcal{U}$. As an obvious extension of this work and backed by the Turing completeness of transformers, one may investigate the construction of an analogous mechanism involving an encoder-decoder-based model to simulate an arbitrary transformer. 
By constructing average-hard attention-based models that exactly replicate $\operatorname{softmax}$-activated attention, we show that algorithmic approximations of problems previously believed to be learnable only through training, such as $\text{Match}_2$ and $k$-\textsf{PARITY}. This rigorously shifts the boundary between empirical approximation and formal simulation in attention-based models.
The development of RASP compilers such as Tracr \citep{lindner2023tracr} and ALTA \citep{shaw2025alta} presents a promising avenue for obtaining realized weights corresponding to the proposed network $\mathcal{U}$. However, challenges towards a complete implementation of the RASP framework still exist in either of these compilers. Along such a line, learning the algorithmically developed transformers and $\mathcal{U}$ via existing optimization-based methods and coming up with a convergence criterion may be considered as future work.





\bibliographystyle{plainnat}
\bibliography{ref}

\appendix
\newpage
\section{Appendix}
\label{sec:appendix}

\subsection{RASP Codes for Implementing the network \texorpdfstring{$\mathcal{U}$}{}}
\label{ssec:RASPCodes}
This section will provide the RASP codes referenced in \autoref{sec:simulation}.
\begin{lstlisting}[caption={Transposing a matrix of order $3$ implementing \autoref{algo:trans}. Note that the transpose operation is a length-preserving operation.}, breakindent=2.5mm,label={lst:transpose}]
  def Transpose_r(){
    r, c = 3, length/3;
    reflectedIndices = (indices%r)*c + ((indices-indices%r))/r;
    reflect = select(indices, reflectedIndices, ==);
    return aggregate(reflect, tokens);
  }
\end{lstlisting}
\begin{lstlisting}[caption={Applying $\softmax$ (a length-preserving operation) on matrix $A$ of order $3$ implementing \autoref{algo:softmax}.}, breakindent=2.5mm,label={lst:softmax}]
  def softmaxrect_r(){
    r, c = 3, length/3;
    exp = (2.73^tokens_float);
    sel1, sel2, sel3 =(select(indices, c*0+c, <) and select(c*0+c, indices, >)), (select(indices, c*0+c, >=) and select(indices, c*1+c, <) and select(c*1+c, indices, >) and select(c*0+c, indices, <=)), (select(indices, c*1+c, >=) and select(indices, c*2+c, <) and select(c*2+c, indices, >) and select(c*1+c, indices, <=));
    denom1, denom2, denom3 = c*aggregate(sel1, exp), c*aggregate(sel2, exp), c*aggregate(sel3, exp);
    denom = (denom1+denom2+denom3);
    return exp/denom;
  }
\end{lstlisting}
\begin{lstlisting}[caption={Multiplying two matrices of shape $3\times\cdot \text{ and } \cdot\times 4$ implementing \autoref{algo:matmul}.}, breakindent=2.5mm, label={lst:matmul}]
  def Matmul_3dot4(){
    k = length/(3+4);

    one_a, one_b, two_a, two_b, three_a, three_b, four_b = indices%k, (indices%k)*4+3*k, (indices%k)+1*k, (indices%k)*4+3*k+1, (indices%k)+2*k, (indices%k)*4+3*k+2, (indices%k)*4+3*k+3;

    one_sa, one_sb, two_sa, two_sb, three_sa, three_sb, four_sb = select(indices, one_a, ==), select(indices, one_b, ==), select(indices, two_a, ==), select(indices, two_b, ==), select(indices, three_a, ==), select(indices, three_b, ==), select(indices, four_b, ==);

    oneone_ab, onetwo_ab, onethree_ab, onefour_ab, twoone_ab, twotwo_ab, twothree_ab, twofour_ab, threeone_ab, threetwo_ab, threethree_ab, threefour_ab = aggregate(one_sa, tokens)*aggregate(one_sb, tokens), aggregate(one_sa, tokens)*aggregate(two_sb, tokens), aggregate(one_sa, tokens)*aggregate(three_sb, tokens), aggregate(one_sa, tokens)*aggregate(four_sb, tokens), aggregate(two_sa, tokens)*aggregate(one_sb, tokens), aggregate(two_sa, tokens)*aggregate(two_sb, tokens), aggregate(two_sa, tokens)*aggregate(three_sb, tokens), aggregate(two_sa, tokens)*aggregate(four_sb, tokens), aggregate(three_sa, tokens)*aggregate(one_sb, tokens), aggregate(three_sa, tokens)*aggregate(two_sb, tokens), aggregate(three_sa, tokens)*aggregate(three_sb, tokens), aggregate(three_sa, tokens)*aggregate(four_sb, tokens);

    sel_one, sel_two, sel_three, sel_four, sel_five, sel_six, sel_seven, sel_eight, sel_nine, sel_ten, sel_eleven, sel_twelve = select(indices, k, <) and select(0, indices, ==), select(indices, k, <) and select(1, indices, ==), select(indices, k, <) and select(2, indices, ==), select(indices, k, <) and select(3, indices, ==), select(indices, k, <) and select(4, indices, ==), select(indices, k, <) and select(5, indices, ==), select(indices, k, <) and select(6, indices, ==), select(indices, k, <) and select(7, indices, ==), select(indices, k, <) and select(8, indices, ==), select(indices, k, <) and select(9, indices, ==), select(indices, k, <) and select(10, indices, ==), select(indices, k, <) and select(11, indices, ==);

    matmul = k*(aggregate(sel_one, oneone_ab)+aggregate(sel_two, onetwo_ab)+aggregate(sel_three, onethree_ab)+aggregate(sel_four, onefour_ab)+aggregate(sel_five, twoone_ab)+aggregate(sel_six, twotwo_ab)+aggregate(sel_seven, twothree_ab)+aggregate(sel_eight, twofour_ab)+aggregate(sel_nine, threeone_ab)+aggregate(sel_ten, threetwo_ab)+aggregate(sel_eleven, threethree_ab)+aggregate(sel_twelve, threefour_ab));
      return matmul;
  }
\end{lstlisting}
\begin{lstlisting}[caption={Implementation of $\operatorname{ReLU}$.}, breakindent=3.5mm, label={lst:relu}]
  def ReLU(){
    return (0 if tokens<0 else tokens);
  }
\end{lstlisting}

\begin{lstlisting}[caption={Implementation of $\operatorname{MaxMin}$ sort realizing \autoref{algo:maxmin}.}, breakindent=2.5mm, label={lst:MaxMin}]
  def MaxMinSort(){
    MaxSel = select(indices, indices, ==);
    MinSel = select(indices, indices+1, ==) and select(1, indices%2+1, ==);  
    MaxminusMin = aggregate(MaxSel, tokens) - aggregate(MinSel, tokens);
    reqFlip = 1 if MaxminusMin<0 else 0;
    reqFlip = reqFlip + aggregate(select(indices+1, indices, ==), reqFlip);
    revby2 = aggregate(select(0, indices%2, ==), 1) + aggregate(select(1, indices%2, ==), -1); 
    flip = select(indices, indices+revby2, ==);
    sorted = reqFlip*aggregate(flip, tokens) + (1-reqFlip)*aggregate(select(indices, indices, ==), tokens);
    return sorted;
  }
\end{lstlisting}

\begin{lstlisting}[caption={Implementation of $\text{Match}_2$.}, breakindent=2.5mm, label={lst:Match2}]
  def Match2(p){
    sel = select(tokens%p, p - (tokens%p), ==);
    detect = aggregate(sel, tokens);
    return (0 if detect == 0 else 1);
  }
\end{lstlisting}
\paragraph{Complexity of individual operations.} To illustrate the complexities in accordance with the discussion in \autoref{sec:discussion}, we present the following analysis on the RASP codes. \autoref{lst:transpose} generates an attention matrix that maps the indices to their transposed position and then passes the tokens to get permuted accordingly. This requires a single non-trivial attention layer and a preceding layer to compute \lstinline{reflectedIndices}. The \lstinline{reflectedIndices} is in fact the permutation $\rho$ as defined in line 1 of \autoref{algo:trans}. A preceding layer with a trivial attention computes auxiliary arithmetic operations as specified in line 2 and 3 of \autoref{lst:transpose} via its FFN layer. For implementing the operation $\softmax$ in \autoref{lst:softmax}, the principal attention layer has a width of $3$, computing the row sums. Note that, the matrices $R_l$s in line 2 of \autoref{algo:softmax} is being realized by this layer. However, a preceding layer contains a trivial attention followed FFN layers  that computes the arithmetic expressions in line 4 (e.g. \lstinline{c*0+c}) of \autoref{lst:softmax}. Similarly, the function matrix multiplication in \autoref{lst:matmul} requires two non-trivial layers and an opening layer for the calculation of several index manipulations. Line 3 is performing necessary index calculations for implementing lines 2, 3 of the respective algorithm. The second attention layer corresponds to lines 4-5 and thus has a width of seven, while the third layer, corresponding to lines 6-7, has width twelve. For easy understanding, we have presented the keyword \lstinline{tokens} in the above code blocks; however, following the RASP semantics, we have used \lstinline{tokens_float} (or \lstinline{tokens_int}) while dealing with numerals in our code repository. A standard construction of UTM stores the transitions of an input TM using some delimiter (mostly a predefined number of $0$s). One may get intimidated to apply the same to delimit the rows of a matrix when presented as a sequence using \autoref{1D}. Though that would help to count the rows and thus columns, the architecture of transformers inhibits us from directly looping on the rows or columns to bypass the explicit construction of the \lstinline{select}-\lstinline{aggregate} pairs (e.g., the three selectors \lstinline{sel1}, \lstinline{sel2}, and \lstinline{sel3} in \autoref{lst:softmax}).

\subsection{Proof of \autoref{col:expressive_hierarchy}}
\label{ssec:proofofhierarchy}
    We prove this by construction. Let $T$ be a single-layer transformer with characterizing matrices $A \in \mathbb{R}^{e \times e}$ and $V \in \mathbb{R}^{e \times e_v}$, and input $X \in \mathbb{R}^{m \times e}$. To simulate $T$ using $\mathcal{U}_{(n,d,d_v)}$, we proceed as follows:
    
    \noindent \emph{Step 1} (Input Embedding).
    Pad the input $X$ to $\widetilde{X} \in \mathbb{R}^{n \times d}$ via zero-padding and block-diagonal extension:
    \[
    \widetilde{X} = \begin{bmatrix}
    X & \mathbf{0}^{m \times (d - e)} \\
    \mathbf{0}^{(n - m) \times e} & \mathbf{0}^{(n - m) \times (d - e)}
    \end{bmatrix},
    \]
    where $\mathbf{0}^{p \times q}$ denotes a zero matrix of size $p \times q$.
    
    \noindent \emph{Step 2} (Embedding the Characterizing Matrices).
    Similarly, embed $A$ and $V$ into higher-dimensional spaces:
    \[
    \widetilde{A} = \begin{bmatrix}
    A & \mathbf{0} \\
    \mathbf{0} & \mathbf{I}^{d - e}
    \end{bmatrix}, \quad
    \widetilde{V} = \begin{bmatrix}
    V & \mathbf{0} \\
    \mathbf{0} & \mathbf{0}
    \end{bmatrix},
    \]
    where $\mathbf{I}^{k}$ is the $k \times k$ identity matrix. The identity block ensures that padded dimensions do not interfere with the computation.
    
    \noindent \emph{Step 3} (Simulation).
    By construction, $\mathcal{U}_{(n,d,d_v)}$ computes:
    \[
    \sigma\left(\frac{\widetilde{X} \widetilde{A} \widetilde{X}^\top}{\sqrt{d}}\right) \widetilde{X} \widetilde{V},
    \]
    which reduces to the original computation $T(X)$ in the upper-left $m \times e_v$ block. The padded dimensions contribute only trivial linear transformations (due to $\mathbf{0}$ and $\mathbf{I}$ blocks), leaving the simulation exact.

\subsection{Alternative Construction for \texorpdfstring{$\mathcal{U}$}{}}
\label{ssec:inverse}
Before the discussion for constructing $\mathcal{U}$, let us implement another elementary matrix operation {--} inversion.

\begin{lemma}
\label{inversion}
    There exists a transformer $T_{-1}$ that can invert a non-singular matrix $A$ of rank $3$.
\end{lemma}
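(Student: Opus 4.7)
The plan is to exploit the closed-form identity $A^{-1} = \frac{1}{\det A}\,\mathrm{adj}(A)$, which for a $3 \times 3$ non-singular matrix expresses every entry of $A^{-1}$ as a polynomial of total degree at most three in the nine entries of $A$, divided by the determinant, itself a polynomial of degree three. Since the rank is fixed, every sum and product involved contains only a bounded number of terms, so no iteration is required and the whole construction fits inside the constant-depth regime of RASP.

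First, following \autoref{1D}, I would flatten $A \in \mathbb{R}^{3 \times 3}$ to a length-$9$ sequence and, for each pair $(i,j) \in \{0,1,2\}^2$, use a \texttt{select}/\texttt{aggregate} pair keyed on the single index $3i+j$ to broadcast the scalar $A[i,j]$ to a length-preserved constant stream. That produces nine named entry streams, after which all remaining work can be carried out by the element-wise arithmetic RASP primitives.

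Next, for each of the nine cofactor positions $(i,j)$, I would write out the $2 \times 2$ minor $M_{ij}$ as a difference of two products of entry streams, multiply by the fixed sign $(-1)^{i+j}$, and place the resulting stream at a predetermined index using another \texttt{select}/\texttt{aggregate} pair keyed on the target position. Assembling the nine cofactors in natural row-major order gives the cofactor matrix as a length-$9$ sequence; a single application of \autoref{lem:transposition} then produces $\mathrm{adj}(A)$. The determinant is computed by cofactor expansion along the first row, namely $A[0,0]\,C_{00} + A[0,1]\,C_{01} + A[0,2]\,C_{02}$, which is again a bounded combination of elementwise products; broadcasting this scalar to a length-$9$ constant stream via a trivial \texttt{aggregate} and dividing the adjugate stream by it entrywise yields $A^{-1}$. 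Non-singularity guarantees $\det A \neq 0$, so the final elementwise division is well-defined.

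The main obstacle is not analytic but combinatorial: every index pattern (which entries to multiply to form each minor, at which position to deposit each cofactor, how to signed-sum them into the determinant) has to be enumerated explicitly within the RASP source, since the language has no loops. For rank $3$ the resulting lookup table is of small fixed size and can be hard-coded directly, yielding a transformer of constant depth and width in the spirit of \autoref{lem:transposition}--\ref{lem:matmul}; the same difficulty, however, is precisely what prevents a clean generalization of the construction to matrices of arbitrary rank $r$ within a single constant-depth network, and is the reason the statement is restricted to rank $3$. I would close the argument by pointing to an explicit RASP listing analogous to those collected in Appendix \ref{ssec:RASPCodes}.
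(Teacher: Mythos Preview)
Your proposal is correct and follows the same mathematical route as the paper: compute the cofactor matrix, form the adjugate via \autoref{lem:transposition}, compute $\det A$ by a cofactor expansion, and divide elementwise. The only substantive difference is at the RASP level. You hard-code nine separate \texttt{select}/\texttt{aggregate} broadcasts (one per entry) and nine more placements for the cofactors, whereas the paper exploits a uniform index formula: for position $(i,j)$ it attends to the four entries at rows $(i{+}1)\bmod 3,\,(i{+}2)\bmod 3$ and columns $(j{+}1)\bmod 3,\,(j{+}2)\bmod 3$, so a single four-head layer computes all cofactors in parallel as $P\cdot Q - R\cdot S$. Both constructions are valid constant-depth transformers; the paper's buys a smaller width (four heads for the cofactor step versus your eighteen-odd), while yours is arguably more transparent about why the rank-$3$ restriction is essential. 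Your closing remark about the obstruction to arbitrary rank matches the paper's own caveat.
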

\begin{proof}
    Given a matrix $A$ and its mapped sequence from \autoref{1D}, the inversion operation is also length-preserving. We shall adopt an analytical framework for inverse computation, utilizing the fundamental operations of matrix cofactor, determinant, and adjugate. The final transposition step can be derived through the application of \autoref{lem:transposition}. The RASP pseudocode is presented in \autoref{algo:matinverse}.
    \begin{algorithm}
        \small
        \SetAlgoLined\SetAlgoNoLine\SetNlSty{}{}{}\LinesNumbered\RestyleAlgo{ruled}\DontPrintSemicolon\SetKwComment{tcp}{\textcolor{blue}{$\vartriangleright$ }}{}\SetCommentSty{texttt}
        \KwIn{$\prop(A)$, where $A \in \R^{r\times r}$.}
        \tcp{\scriptsize $r \mapsfrom$ rank of $A$.}
        Assign $R_\alpha[j, i] = 1$, for all $i \in [r^2-1]_{\cup\{0\}}$ and $j = r\times(((i-i \mod r)/r+1)\mod r) +( ((i \mod r) + 1) \mod r)$, to $R_\alpha \in \mathbf{0}^{r^2\times r^2}$.\;  
        Assign $R_\beta[j, i] = 1$, for all $i \in [r^2-1]_{\cup\{0\}}$ and $j = r\times(((i-i\mod r)/r + 2) \mod r) + (((i\mod r)+2)\mod r)$, to $R_\beta \in \mathbf{0}^{r^2\times r^2}$.\; 
        Assign $R_\gamma[j, i] = 1$, for all $i \in [r^2-1]_{\cup\{0\}}$ and $j = r\times (((i - i\mod r)/n + 1)\mod r) + (((i\mod r)+2)\mod r)$, to $R_\gamma \in \mathbf{0}^{r^2\times r^2}$.\; 
        Assign $R_\delta[j, i] = 1$, for all $i \in [r^2-1]_{\cup\{0\}}$ and $j = r\times (((i-i\mod r)/r+2)\mod r) + (((i\mod r)+1)\mod r)$, to $R_\delta \in \mathbf{0}^{r^2\times r^2}$.\; 
        Let $P = R_\alpha \circ \prop(A), Q = R_\beta \circ \prop(A), R = R_\gamma \circ \prop(A)$ and $S = R_\delta \circ \prop(A)$.\;
        $M_A = P\otimes Q - R\otimes S$.\; \tcp{\scriptsize $M_A$ is the cofactor of matrix $A$.}
        Assign $R_{\text{mask}}[i,j] = 1$ for $i = j$ and $j < r$, to $R_{\text{mask}} \in \mathbf{0}^{r^2\times r^2}$. \; \tcp{\scriptsize Create a mask that attends to the first $r$ positions of $\prop(A) \otimes M_A$.}
        $\operatorname{det}(A) = \mathbf{1}^{r^2\times r^2} \circ \text{ Masked}$, where $\text{Masked} = R_{\text{Mask}} \circ (\prop(A) \otimes M_A)$.\;
        return $M_A$ and $\operatorname{det}(A)$.\; 
        \caption{\small Finding Cofactor and Determinant of square matrix $A$ of rank $3$.}
        \label{algo:matinverse}
    \end{algorithm}
    
    The RASP code for finding cofactor and determinant has been provided in \autoref{lst:cofactor}, \ref{lst:det}.
    
    The expression $M_A/|A|$ (or, \lstinline{Cofactor()/Det(Cofactor())(A)}) now yields the transpose of the adjugate $\operatorname{Adj}(A)$. Subsequently, applying the transposition operation to the adjugate results in the desired matrix inverse $A^{-1}$.
\end{proof}

\begin{lstlisting}[caption={Finding Cofactor of a Matrix as a part of implementing \autoref{algo:matinverse}.}, breakindent=3.5mm, label={lst:cofactor}]
  def Cofactor(){
    n = length^0.5;
    i,j = (indices-indices%n)/n, indices%n;
    
    idx1, idx2, idx3, idx4 = (i+1)%n, (j+1)%n, (i+2)%n, (j+2)%n;
    one, two, three, four = idx3*n+idx4, idx1*n+idx2, idx3*n+idx2, idx1*n+idx4;
      
    sel_one, sel_two, sel_three, sel_four = select(indices, one, ==), select(indices, two, ==),select(indices, three, ==), select(indices, four, ==);
    P, Q, R, S = aggregate(sel_one, tokens), aggregate(sel_two, tokens), aggregate(sel_three, tokens), aggregate(sel_four, tokens);
    cofactor = P*Q-R*S;
    return cofactor;
  }
\end{lstlisting}
\begin{lstlisting}[caption={Finding Determinant of a Matrix as a part of implementing \autoref{algo:matinverse}.}, breakindent=3.5mm, label={lst:det}]
    def Det(Cofactor){
        n = length^0.5;
        mask = select(indices, n, <) and select(indices, indices, ==);
        det = length*aggregate(full_s, aggregate(mask, (tokens*Cofactor)));
        return det;
    }
\end{lstlisting}

Consider a transformer network $\mathcal{U}$ that adapts its parameters in the final attention layer depending on the input parameters $A, V$, and $X$, where $A \text{ and } V$ are the non-singular characterizing matrices of an attention $T$. Although such a construction does not solely satisfy the motivation as depicted in \autoref{fig:U}, it may be worth an attempt to explore the expressive power of a transformer implemented using RASP.

From the fact that $\det(M_1M_2) = \det(M_1)\det(M_2) \ne 0$ when matrices $M_1$ and $M_2$ are both non-singular, it is straightforward to see that matrix $M_1M_2$ is also non-singular. Now, we will construct $\mathcal{U}$, which will take input a sequence $X, A$ and $V$ such that the final attention layer, say $L$ receives input $XAV$. Thus, to simulate $T$ on input $X$, the attention and value matrices at layer $L$ of $\mathcal{U}$ must be $\left((AV)^\top V\right)^{-1}$ and $\left(AV\right)^{-1}V$ respectively, so that it produces:
\begin{align}
    \sigma\left(\left(XAV\right)\left((AV)^\top V\right)^{-1}\left(XAV\right)^\top\right)\left(XAV\right)\left(A{V}\right)^{-1}V \label{eq:inv}
\end{align}

As long as the characterizing matrices of $T$ are non-singular and of rank $3$, the attention and value matrices of $\mathcal{U}$ can be realized through a series of sequential operations implementable using the previous lemmas.

\begin{figure}
    \centering
    \includegraphics[width=0.92\textheight, height=\textwidth, angle=90]{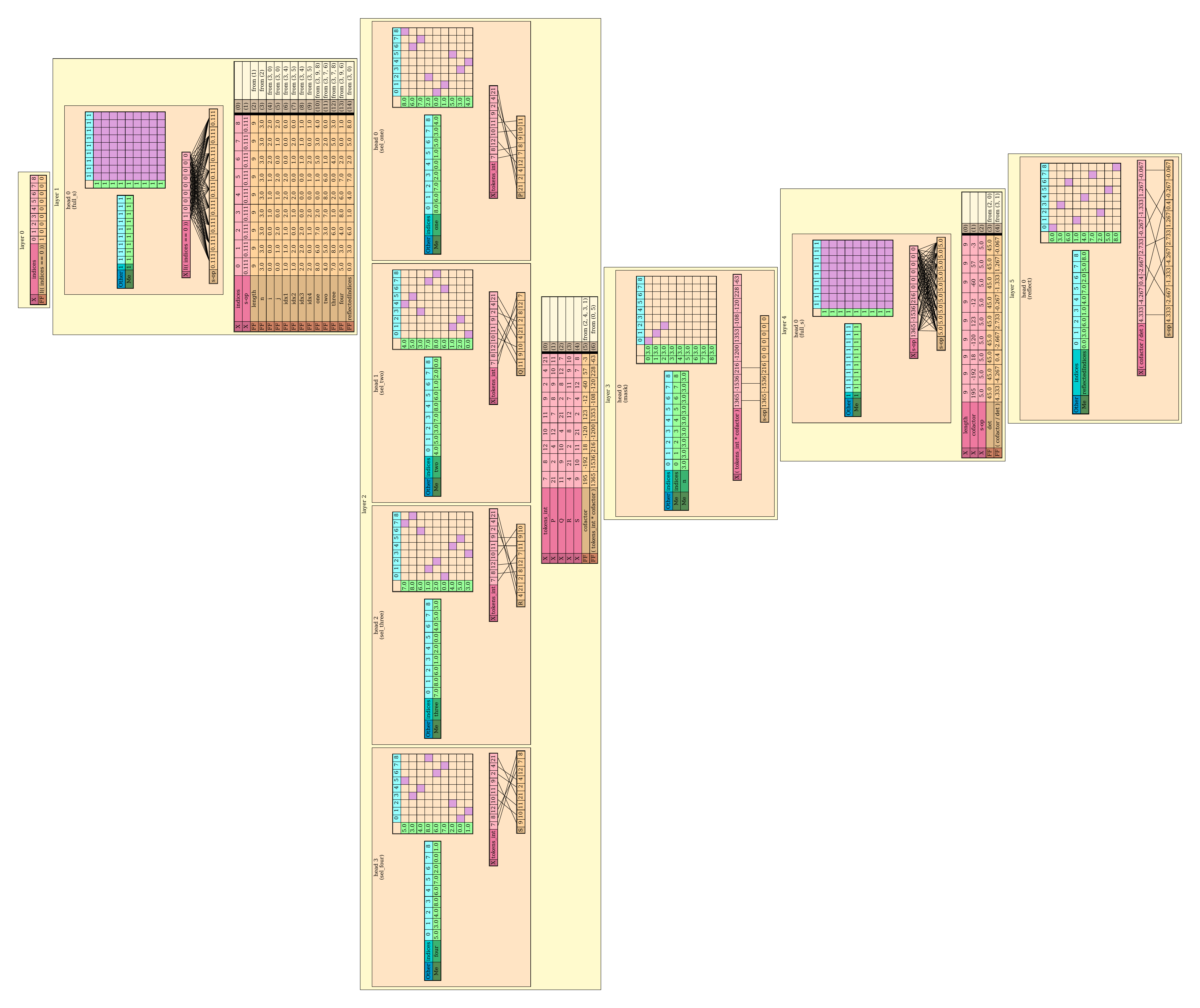}\vspace{-6pt}
    \caption{Execution of $T_{-1}$ on {\small \setlength{\arraycolsep}{3pt} $A = \begin{pmatrix} 7 & 8 & 12\\ 10 & 11 & 9\\ 2 & 4 & 21 \end{pmatrix}$}. View clearly at {\small \url{https://anonymous.4open.science/r/TMA/Inverse.pdf}.}}
    \label{fig:completeinverse}
\end{figure}

The function \lstinline{Cofactor()}, a non-trivial attention layer with four heads representing the agents to pick the four sequences of elements involved to calculate the respective cofactor, and a preceding layer responsible for index calculation. The other one \lstinline{Det()} calculating determinant takes the function \lstinline{Cofactor()} as an argument, thus giving rise to two additional attention layers where the first one attends to the first three indices and the following layer is a trivial one doing the multiplication with \lstinline{length}. However, while inverting a matrix, we may combine some arithmetic operations, mostly taking place in the first layer of attention of the aforementioned functions, which will help us to get a five-layer transformer having a width of four (see \autoref{fig:completeinverse}). The existing implementations of calculating inverse (e.g., \cite{giannou2023looped}) involve Newton's iterative formula. The constant-depth (13-deep, 1-wide) transformers only approximate the solution (up to $T$ steps), and fundamentally, rely on a computational framework that is neither entirely transformer-based (as they use \texttt{for}) nor the classical computational paradigm (as they use transformers). On the other hand, it is important to discuss the constructional challenges with matrix inversion in RASP. Despite producing an exact solution, computing the cofactor of a rank-$(k+1)$ non-singular matrix depends on that of precisely $(k+1)^2$ rank-$k$ matrices. Achieving this inherent \emph{recursive} computation for arbitrary $k$ using a constant-depth architecture such as a transformer is not amenable.

\end{document}